\documentclass[10pt]{article}
\usepackage{appendix}
\usepackage{amsmath,graphicx,hyperref}
\usepackage{amsmath, amssymb, epsfig, graphicx, bm}
\usepackage{cases}

\usepackage[left=1in,top=1in,right=1in,bottom=1in,letterpaper]{geometry}

\usepackage{float}
\usepackage{subfigure}
\usepackage[utf8]{inputenc} 
\usepackage[T1]{fontenc} 
\usepackage{graphicx}
\usepackage{color}
\usepackage{xcolor} 
\usepackage[table]{xcolor}
\usepackage{makecell}
\usepackage{epstopdf}
\usepackage{amsthm}
\usepackage{booktabs}
\usepackage{pgfplots}
\usepackage{booktabs} 
\usepackage{multirow}
\usepackage{underscore}
\usepackage{amsfonts}
\usepackage{pgfplots}
\usepackage{pgfplotstable}
\usepackage{authblk}
\pgfplotsset{compat=1.17} 
\usepackage{cite, url}
\usepackage[all,cmtip]{xy}
\usepackage{color,hyperref}
\usepackage{algorithm}
\usepackage{algpseudocode}
\algtext*{EndFor}
\algtext*{EndIf}
\algtext*{EndFunction}
\usepackage{flushend}
\usepackage{graphicx}
\usepackage{textcomp}
\usepackage{xcolor}
\usepackage{array}
\usepackage{pifont}     
\usepackage{multirow}

\newtheorem{lemma}{Lemma}

\newtheorem{remark}{Remark}
\newtheorem{theorem}{Theorem}

\newcommand{\NucGD}{{\texttt{\textbf{NucGD}}}}



\title{\textbf{Towards The Implicit Bias on Multiclass Separable Data \\under Norm Constraints}}
\author{
  Shengping Xie\thanks{\texttt{2300010702@stu.pku.edu.cn}}, Zekun Wu\thanks{\texttt{2300010701@stu.pku.edu.cn}}, Quan Chen\thanks{\texttt{2300010617@stu.pku.edu.cn}}, Kaixu Tang\thanks{\texttt{2300012401@stu.pku.edu.cn}} \\ 
  \textit{School of Mathematical Sciences, Peking University} \\
}
\date{}

\begin{document}

\maketitle

\vspace{-3em}
\begin{abstract}
Implicit bias induced by gradient-based algorithms is essential to the generalization of overparameterized models, yet its mechanisms can be subtle. This work leverages the \emph{Normalized Steepest Descent} (NSD) framework to investigate how optimization geometry shapes solutions on multiclass separable data.
We introduce \NucGD\, a geometry-aware optimizer designed to enforce low rank structures through nuclear norm constraints. Beyond the algorithm itself, we connect \NucGD\ with emerging low-rank projection methods, providing a unified perspective. To enable scalable training, we derive an efficient \emph{SVD-free} update rule via asynchronous power iteration.
Furthermore, we empirically dissect the impact of stochastic optimization dynamics, characterizing how varying levels of gradient noise induced by mini-batch sampling and momentum modulate the convergence toward the expected maximum margin solutions.
Our code is accessible at:\\ \url{https://github.com/Tsokarsic/observing-the-implicit-bias-on-multiclass-seperable-data}.
\end{abstract}

\section{Introduction}
\vspace{-1em}
A key mystery in deep learning is how overparameterized models, such as neural networks, achieve strong generalization. Despite having the capacity to overfit, these models spontaneously learn simple representations through gradient-based training. This behavior suggests the existence of an "\textbf{implicit bias}", where the optimization algorithm itself favors low-complexity models. 

The phenomenon of implicit bias is currently best characterized within the context of unregularized Empirical Risk Minimization (ERM) on linearly separable data. Seminally, Soudry et al.  \cite{soudry2018implicit} demonstrated that for linear models trained via Gradient Descent (GD) with exponential loss, the weight vector asymptotically converges to the direction of the maximum $\ell_2$ margin SVM solution. This foundational result was recently extended to the multiclass setting by Ravi et al. \cite{ravi2024implicit}, confirming the universality of GD's preference for margin maximization.

However, modern large-scale model training relies heavily on adaptive gradient methods. Recent studies \cite{zhang2024implicit,xie2024implicit} have revealed that optimizers like Adam and AdamW exhibit an implicit bias associated with the $\ell_\infty$  norm, converging to the maximum $\ell_\infty$  margin solution. Furthermore, matrix-preconditioned methods typified by Muon \cite{liu2025muon} and SHAMPOO \cite{gupta2018shampoo} have been interpreted as approximations of steepest descent under spectral norm constraints \cite{riabinin2025gluon,lau2025polargrad}.

To integrate these diverse findings, Fan et al. \cite{fan2025implicit} proposed the \textbf{Normalized Steepest Descent} (NSD) framework, which models optimization updates as steepest descent steps constrained by a specific norm ball:
\begin{equation*}
\bm{\Delta_t} := \arg\max_{\|\bm{\Delta}\|_{\cdot}\leq \gamma} \langle \bm{\nabla_t}, \bm{\Delta}\rangle, 
\quad
\bm{x}_{t+1} = \bm{x}_{t} - \bm{\Delta_t}
\end{equation*}
Theoretical analysis within this framework establishes that on multiclass separable data, the convergence direction aligns with the maximum margin solution corresponding to the specific norm. Furthermore, replacing the gradient $\bm{\nabla_t}$ with the momentum $\bm{M}_t$ does not alter the results. 

This paradigm of implicit bias analysis exhibits strong generality. Specifically, when performing steepest descent with the spectral norm constraint and replacing $\bm {\nabla_t}$ with momentum $\bm{M}_t$, the corresponding optimizer
is Muon. When the $\ell_\infty$   norm is adopted as the constraint, it corresponds to SignGD, while Adam can be viewed as a smoothed version of SignGD~\cite{lau2025polargrad}. This connection thus unifies the previously fragmented
theoretical frameworks into a coherent narrative.
\paragraph*{Contributions.} Building on the previous research, our contributions contains two aspects:
\begin{itemize}
\item\textbf{Proposed Optimizer with Low-Rank Implicit Bias}

We propose \NucGD\, a novel optimizer derived from nuclear norm constraints, and investigate its implicit bias towards the maximum nuclear norm margin solution, which naturally promotes parsimonious low-rank representations. Crucially, we establish a novel link between \NucGD\ and popular low-rank gradient projection methods (e.g., Galore~\cite{zhao2024galore}), offering a unified theoretical justification for their efficacy. To make \NucGD\ practical, we further derive an \emph{SVD-free} implementation using asynchronous power iteration and discuss its asymptotic property.

\item \textbf{Analysis of Implicit Bias under Stochastic Optimization Dynamics.}

We implement NSD algorithms for multiclass linear models under various geometric constraints($\ell_2$, $\ell_\infty$, spectral, nuclear), and dissect the effect of stochastic optimization dynamics, specifically focusing on gradient noise induced by mini-batch sampling and trajectory smoothing via momentum.Through these experiments, we empirically observe and verify the convergence behavior of implicit bias in multiclass classification scenarios.

\end{itemize}
\section{Preliminaries}
In this section, we mainly follow the setting used in \cite{fan2025implicit}.\\
\noindent \textbf{Multiclass linear model.}Consider multiclass classification problem with training data $\boldsymbol{x}_1, \dots, \boldsymbol{x}_n$ and labels $y_1, \dots, y_n$. Each data point $\boldsymbol{x}_i \in \mathbb{R}^d$ is a $d$-dimensional embedding space (denote data matrix $\boldsymbol{X} = [\boldsymbol{x}_1, \dots, \boldsymbol{x}_n]^\top \in \mathbb{R}^{n \times d}$), and each label $y_i \in [k]$ represents one of $k$ classes. We assume each class contains at least one data point. The classifier $f_{\boldsymbol{W}}: \mathbb{R}^d \to \mathbb{R}^k$ is a linear model with weight matrix $\boldsymbol{W}=[\boldsymbol{w}_1,\boldsymbol{w}_2,\cdots ,\boldsymbol{w}_k]^\top\in \mathbb{R}^{k \times d}$. Each $w_i \in \mathbb{R}^{d}$ denotes a linear model $f_i(\boldsymbol{x})=\boldsymbol{w}_i^\top \boldsymbol{x}$ of class $i$ .

We train the model using empirical cross entropy loss: 
\[
\mathcal{L}(\boldsymbol{W}) := -\frac{1}{n} \sum_{i \in [n]} \log \left( \mathcal{S}_{y_i}(\boldsymbol{W} \boldsymbol{x}_i) \right). \tag{1}\label{lossfunction}
\]
Where the softmax map is defined as: $\mathcal{S}_c(\boldsymbol{a}) = \left[ \frac{\exp(\boldsymbol{a}[c])}{\sum_{c'=1}^k \exp(\boldsymbol{a}[c'])} \right]_{c=1}^k$. \\
\\
\noindent \textbf{Separable Data and Max Margin Solutions}
We define the SVM margin for a well-trained classifier to evaluate its robustness and generalization ability as follow:
$$m(\boldsymbol{W})=\underset{i \in [n],y \neq y_i}{\text{min}}(\boldsymbol{W}\boldsymbol{x}_i[y_i]-\boldsymbol{W}\boldsymbol{x}_i[y])=\underset{i \in [n],y \neq y_i}{\text{min}}(\boldsymbol{w}_{y_i}^\top \boldsymbol{x}_i-\boldsymbol{w}_{y}^\top \boldsymbol{x}_i)$$
Then $m(\boldsymbol{W}) \ge 0$ is equivalent to a correct classification of all data samples. To analyze the implicit regularization of optimization algorithms, we need to assume the data are \textbf{linearly separable}, so that we can further analyze its behavior after perfectly fit the training data:
$$\exists\boldsymbol{W} \ \text{s.t.} \quad m(\boldsymbol{W}) >0$$
As the model is linear, if we let $\boldsymbol{W'}=c\boldsymbol{W'}$ and $c \to +\infty$, we can reach $m(\boldsymbol{W}) \to +\infty$, however, we can evaluate the generalization properties for the weights with its relative margin computed by dividing its norm:\begin{align*}m_\textbf{.}(\boldsymbol{W})=\frac{m(\boldsymbol{W})}{\|\boldsymbol{W}\|_\textbf{.}} \tag{2}\label{relativemargin}\end{align*}
Where $\|\cdot\|_\textbf{.}$ can be replaced by any matrix norm. 

As $\|\cdot\|_\textbf{.}$ is linear, this metric can reflect the the separability to the data of the direction of $\boldsymbol{W}$. So We can intuitively reason that the higher $m_\textbf{.}(\boldsymbol{W})$ is, the better generalization ability the model contains. However, since different norms capture distinct properties of the weight matrix, the optimal directions under different norms may not align. We define the \textbf{max SVM margin solution} under norm $\|\cdot\|_\textbf{.}$ as follow:
\begin{align*}\boldsymbol{W^\textbf{.}}=\underset{\|\boldsymbol{W}\|_\textbf{.}=1}{\text{argmax}} \ m(\boldsymbol{W}) \tag{3} \label{maxmargin}\end{align*}
Here, we mainly focus on common norms such as Frobenius norm $\|\boldsymbol{W}\|_F$, $\ell_\infty$ norm $\|\boldsymbol{W}\|_{max}$, spectral norm $\|\boldsymbol{W}\|_2$ and nuclear norm $\|\boldsymbol{W}\|_{*}$\\
\noindent \textbf{Normalized Steepest Descent.} We consider a family of optimization algorithms for an objective function $f$ under a specific norm $\|\cdot\|_\textbf{.}$ with an optional first-order momentum. At each training step, the algorithm first computes the gradient and accumulates it into the momentum state. However, instead of directly applying the momentum to the weights, the algorithm updates the parameters along the steepest descent direction of the momentum within a norm-constrained ball.
\begin{align*}
\bm{G}_t&=\bm{\nabla} f(\bm{x}_t),\\
\bm{M}_t&=\mu \bm{M}_{t-1}+(1-\mu)\bm{G}_t,\\
{\bm{\Delta}}_t &= \underset{\|{\bm{\Delta}}\|_\textbf{.} \leq \gamma}{\text{argmax}} \langle \bm{M}_t, {\bm{\Delta}} \rangle, \\\bm{x}_{t+1}&=\bm{x}_t-{\bm{\Delta}}_t,
\label{alg-nsd} \tag{4}
\end{align*}
Where $\gamma>0$ and $\mu \in [0, 1)$ denotes the step size and the momentum weight, respectively. When $\mu=0$ and $f$ is linear, it is equivalent to do a steepest descent in the respective norm-constraint ball.\\
The above framework contains a wide range of existing popular training algorithms for deep learning. According to ~\cite{fan2025implicit}, the detailed information of each algorithm and their corresponding norm can be shown in Table ~\ref{tab:nsd-alg}.

\begin{table}[h]
\centering

\label{tab:nsd-alg}
\vspace{1em}
\begin{tabular}{|l|l|l|l|}
\hline
Method & Norm Constraint & Update $\boldsymbol{\Delta}$ & Reference  \\
\hline\hline
NGD & \multirow{2}{*}{Unit $\|\cdot\|_2$-ball} & $\frac{\boldsymbol{G}}{\|\boldsymbol{G}\|_2}$ & Hazan et al. \cite{hazan2015beyond}  \\
\cline{1-1} \cline{3-4}
NMD-GD &  & $\frac{\boldsymbol{M}}{\|\boldsymbol{M}\|_2}$ & Cutkosky and Mehta \cite{cutkosky2020momentum}  \\
\hline
SignGD & \multirow{2}{*}{Unit $\|\cdot\|_{\text{max}}$-ball} & $\text{sign}(\boldsymbol{G})$ & Bernstein et al. \cite{bernstein2018signsgd} \\
\cline{1-1} \cline{3-4}
Signum &  & $\text{sign}(\boldsymbol{M})$ & Bernstein et al. \cite{bernstein2018signsgd} \\
\hline
{Spectral-GD} & \multirow{2}{*}{Unit $\||\cdot|\|_\infty$-ball} & $UV^T$ & Bernstein and Newhouse \cite{bernstein2024old} \\
\cline{1-1} \cline{3-4}
{Muon}  &  & $\tilde{U}\tilde{V}^T$ & Liu et al. \cite{liu2025muon} \\
\hline
\end{tabular}
\caption{Summary of existing algorithms under the framework of NSD under respective norm constraints. Here $U,V,\tilde{U},\tilde{V}$ are computed by the SVD decomposition of the gradient or momentum, and the algorithms listed above each group correspond to the momentum-free method.}
\label{norm}
\end{table}

\noindent{\textbf{Theoretical Analysis of Implicit Bias.}}
Fan et al. \cite{fan2025implicit} has presented a theoretical analysis on implicit bias for Normalized Steepest Algorithms \eqref{alg-nsd} 
\begin{theorem}
For multiclass linear model defined in equation~\eqref{lossfunction} with separable data, under basic assumptions, when the step size is taken of order $\sqrt{1/t}$, the relative margin of the weight matrix equation ~\eqref{relativemargin} follow the iterate of NSD will converge to the maximum relative margin:
$$m_\textbf{.}(\boldsymbol{W^\textbf{.}}) - m_\textbf{.}(\boldsymbol{W}_T)\leq\mathcal{O}\left(\frac{\log T}{T^{1/2}}\right)$$
\label{convergence}
\end{theorem}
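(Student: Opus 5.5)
We follow the standard margin-potential argument for steepest descent on separable data, working with the momentum-free case $\mu=0$ first and then absorbing the momentum. Let $\|\cdot\|$ be the chosen norm, $\|\cdot\|_\star$ its dual, and $\gamma$ the max-margin value $\gamma_\star := m_\textbf{.}(\boldsymbol{W^\textbf{.}})$. Take step sizes $\eta_t=\eta/\sqrt{t+1}$ and assume $\|\boldsymbol{x}_i\|$ is bounded, so the loss is smooth with respect to $\|\cdot\|$ in a neighborhood of each iterate.

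\textbf{Step 1 (descent and lower bound on the gradient dual norm).} By the definition of $\bm\Delta_t$ we have $\langle \nabla\mathcal L(\boldsymbol W_t),\bm\Delta_t\rangle=\eta_t\|\nabla\mathcal L(\boldsymbol W_t)\|_\star$. A second-order Taylor bound then gives
\[
\mathcal L(\boldsymbol W_{t+1})\le \mathcal L(\boldsymbol W_t)-\eta_t\|\nabla\mathcal L(\boldsymbol W_t)\|_\star+C\eta_t^2\,\mathcal G(\boldsymbol W_t).
\]
Here $\mathcal G(\boldsymbol W)=\frac1n\sum_i(1-\mathcal S_{y_i}(\boldsymbol W\boldsymbol x_i))$ is a proxy for which the Hessian of cross entropy is controlled by $\mathcal G$ uniformly in the local region. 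We then use duality with the max-margin solution:
\[
\|\nabla\mathcal L(\boldsymbol W)\|_\star\ge\langle -\nabla\mathcal L(\boldsymbol W),\boldsymbol W^\textbf{.}\rangle\ge\gamma_\star\,\mathcal G(\boldsymbol W).
\]
The second inequality holds because $-\nabla\mathcal L$ is an average of combinations $\sum_{y\ne y_i}p_{iy}(e_{y_i}-e_y)\boldsymbol x_i^\top$ with total weight $1-\mathcal S_{y_i}$. Combining, we get $\mathcal L_{t+1}\le\mathcal L_t-\eta_t\gamma_\star\mathcal G_t(1-C\eta_t/\gamma_\star)$. Since $\mathcal G\le\mathcal L$ and $\mathcal G\ge(1-e^{-\mathcal L})$, after finitely many steps $\mathcal L_t\to0$ and $\mathcal G_t/\mathcal L_t\to1$.

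\textbf{Step 2 (margin lower bound via the log-loss potential).} We use $m(\boldsymbol W)\ge -\log(n\mathcal L(\boldsymbol W))$ for small loss, which follows from $\log(1+\sum e^{-\text{gap}})\ge$ terms. Telescoping the descent inequality in log-space with $\log(1-z)\le -z$ gives
\[
-\log \mathcal L_T\ge\gamma_\star\sum_{t<T}\eta_t\frac{\mathcal G_t}{\mathcal L_t}-C\sum_t\eta_t^2-O(1).
\]
Meanwhile, $\|\boldsymbol W_T\|\le\sum_{t<T}\eta_t$. Dividing yields
\[
\frac{m(\boldsymbol W_T)}{\|\boldsymbol W_T\|}\ge\gamma_\star\frac{\sum_t\eta_t\,\mathcal G_t/\mathcal L_t}{\sum_t\eta_t}-\frac{C\sum\eta_t^2+\log n+O(1)}{\sum\eta_t}.
\]
With $\eta_t\sim t^{-1/2}$ we have $\sum\eta_t\sim\sqrt T$ and $\sum\eta_t^2\sim\log T$, so the second term is $O(\log T/\sqrt T)$.

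\textbf{Step 3 (the main obstacle).} We must show $1-\mathcal G_t/\mathcal L_t$ is summable against $\eta_t$ at the required rate. We have $1-\mathcal G/\mathcal L\lesssim\mathcal L$, so it suffices to show $\mathcal L_t$ decays fast enough, e.g. $\mathcal L_t\le\exp(-c\sqrt t)$ after a burn-in. This follows from Step 1 once $\mathcal G_t\ge\mathcal L_t/2$: $\log\mathcal L$ decreases by at least $c\eta_t$ per step, making $\sum_t\eta_t\mathcal L_t=O(1)$. The delicate part is the burn-in, meaning reaching the regime where the loss is below $\log2/n$ with $\eta_t$ small enough. We would handle it by a separate crude bound showing the loss decreases whenever $\eta_t\le c\gamma_\star$; this costs only an additive constant absorbed in $O(1)$.

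\textbf{Momentum.} For $\mu>0$, we bound $\|\boldsymbol M_t-\nabla\mathcal L(\boldsymbol W_t)\|_\star\lesssim\sum_s\mu^{t-s}\eta_s\,\mathcal G_s$ using the local Lipschitzness of the gradient relative to $\mathcal G$ (ratio $\mathcal G_s/\mathcal G_t$ bounded by $e^{O(\sum\eta)}$). This only adds an $O(\eta_t^2\mathcal G_t/(1-\mu))$ term to the descent inequality, leaving the rate unchanged.
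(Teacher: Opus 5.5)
The paper gives no proof of this theorem; it imports the result from Fan et al.~\cite{fan2025implicit}, and your outline is essentially the argument used there: the dual-norm bound $\|\nabla\mathcal{L}\|_\star\ge\gamma_\star\mathcal{G}$ via the max-margin solution, a second-order bound with Hessian controlled by $\mathcal{G}$, $\mathcal{G}_t/\mathcal{L}_t\to 1$, a log-loss lower bound on the unnormalized margin, $\|\boldsymbol{W}_T\|\le\sum_t\eta_t$, and a momentum error of order $\eta_t\mathcal{G}_t$. Two small slips need fixing but do not affect the rate: $\mathcal{G}\ge 1-e^{-\mathcal{L}}$ is false in general (concavity of $1-e^{-x}$ gives the reverse), so use $\mathcal{G}\ge\frac{1}{n}(1-e^{-n\mathcal{L}})$ or $\mathcal{G}\ge\mathcal{L}-\frac{n}{2}\mathcal{L}^2$, and the margin bound should read $m(\boldsymbol{W})\ge-\log(e^{n\mathcal{L}}-1)\ge-\log(n\mathcal{L})-n\mathcal{L}$, which changes only the $O(1)$ term.
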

\vspace{-2em}
We can also imply the direction of the weight matrix also align well with the max margin solution in equation~\eqref{maxmargin} under the corresponding norm.
$$\lim_{t \to \infty}\frac{\boldsymbol{W}_t}{\|\boldsymbol{W}_t\|_\textbf{.}} \approx  \boldsymbol{W^\textbf{.}}$$

\section{\NucGD: Normalized Steepest Descent Under Nuclear Norm}
Theorem~\ref{convergence} motivates a promising strategy, indicating that NSD-type algorithms have the potential to induce solutions with \emph{favorable} structural properties. By explicitly solving norm-constrained optimization problems, we can derive algorithms that steer the model towards the desired direction.

Beyond the norms considered previously,
The \textit{nuclear norm} $\|A\|_{*}$ of a matrix $A \in \mathbb{R}^{m \times n}$ is the sum of its singular values: $\|A\|_{*} = \sum_i \sigma_i$. As $\sigma_i \geq 0$ by definition, $\|A\|_{*}$ is also the $L^1$ norm of its vector of singular values and it acts as the tightest convex surrogate of the matrix rank. 

Just as $L^1$ regularization steers learning problems towards solutions with many zero entries, nuclear norm regularization steers \textit{matrix} learning problems towards \textit{low-rank} solutions with many zero singular values\cite{scarvelis2024nuclear}, which encourages low rank models. Low-rank models are known to capture the intrinsic global structure and latent relationships in data more faithfully, which in turn often leads to improved generalization performance in tasks . For this reason, nuclear norm regularization has seen widespread use in deep learning. However, explicitly adding nuclear norm as regularization term is computational costly, as computing the nuclear norm of a matrix requires full SVD decomposition which is impractical in large scale training. 

Meanwhile, relying on Theorem ~\ref{convergence} and above analysis, NSD algorithms under nuclear norm potentially contains an implicit bias toward low rank models as it converge to the max margin solutions under nuclear norm for a linear model, therefore it is expected to play the similar role as explicit low rank regularization. To further validate this observation,we visualized the detailed information of the max margin solutions under nuclear norm and the previous norms appeared in Table \ref{norm} in Figure ~\ref{fig:spectral} ~\ref{fig:heatmap}. 

\begin{figure}[h]
    \centering
    \subfigure[correlation between different matrix]{
        \includegraphics[width=0.4\textwidth]{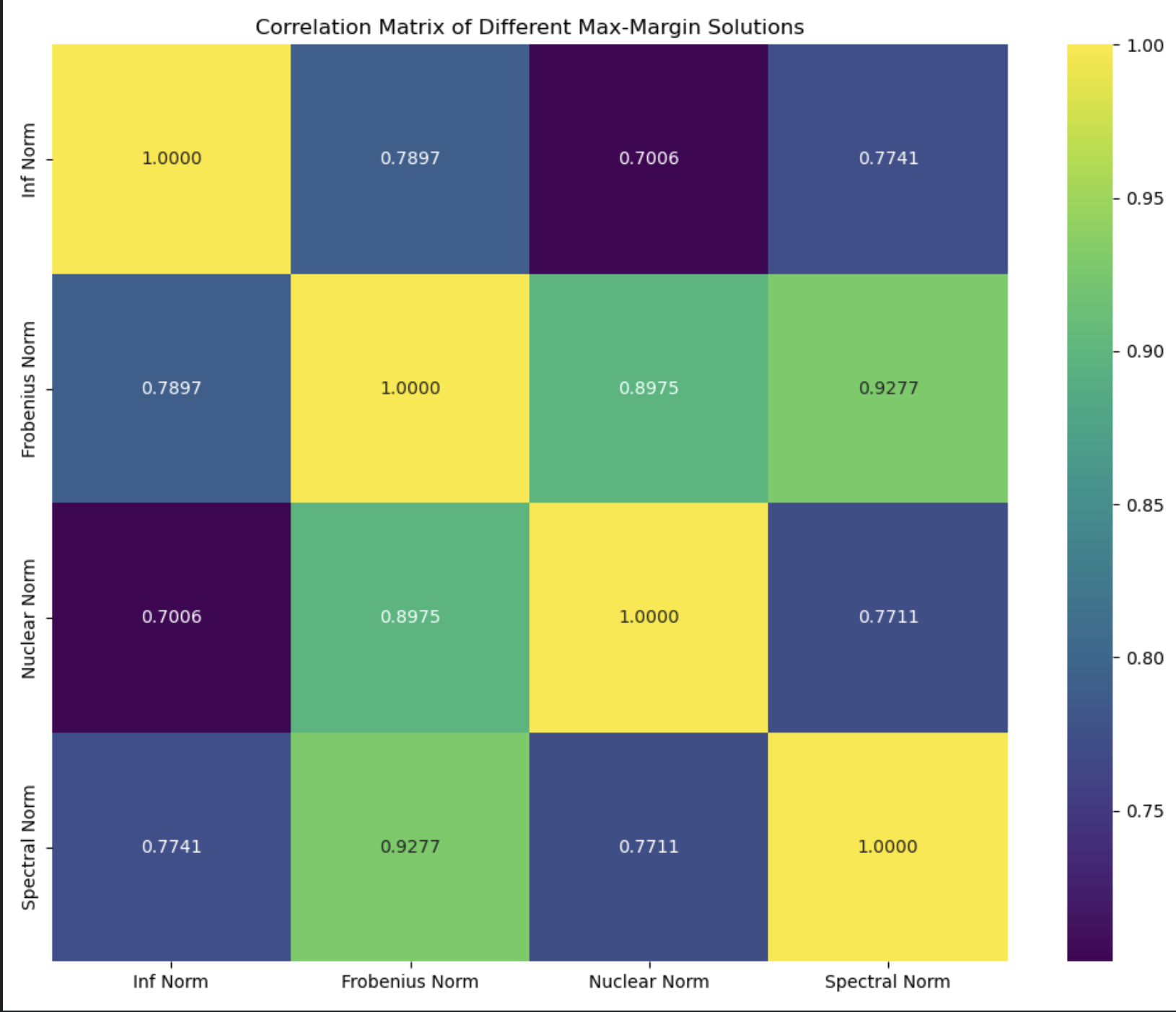}
    }
    \subfigure[spectrum of each matrix]{
        \includegraphics[width=0.35\textwidth]{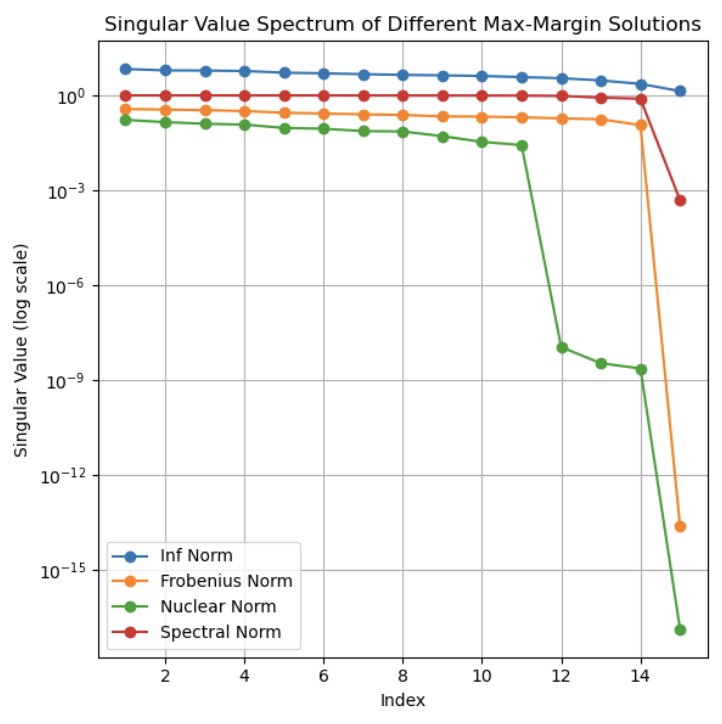}
    }
    \caption{Correlations and spectrum for max margin solutions under different norm}
    \label{fig:spectral}
\end{figure}

\begin{figure}
    \centering
    \includegraphics[width=0.95\linewidth]{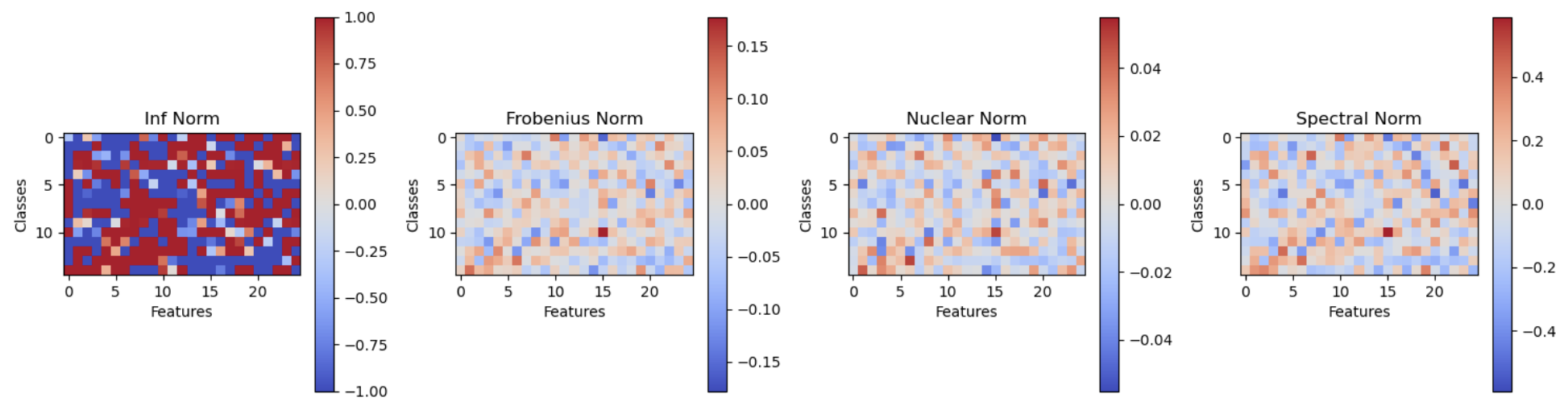}
    \caption{The Weight Heatmap of max margin solutions under different norm}
    \label{fig:heatmap}
\end{figure}
From Figure ~\ref{fig:spectral}, though the direction of different solutions are relatively similar, the nuclear norm max-margin solution demonstrated a much better low-rankness compared to other solutions with 4 of the spectrum close to 0. Previous verification reflects the deriving of NSD under nuclear norm is of value.

To formulate the \NucGD\ Algorithm, Theorem ~\ref{nuctrm} provides us a theoretical approach to compute the steepest descent update direction.

\begin{theorem}
Consider a matrix $\boldsymbol{M} \in \mathbb{R}^{k \times d}$ with its Singular Value Decomposition (SVD) given by $\displaystyle \boldsymbol{M}=\sum_{i=1}^{\min\{k,d\}} \sigma_i \boldsymbol{u}_i \boldsymbol{v}_i^\top$. Here, $\sigma_1 \ge \sigma_2 \ge \cdots \ge 0$ are the singular values, and $\boldsymbol{u}_i \in \mathbb{R}^{k}$, $\boldsymbol{v}_i \in \mathbb{R}^{d}$ denote the corresponding orthogonal and unit left and right singular vectors, respectively. Then we have:
$$\gamma u_1v_1^\top \in \underset{\boldsymbol{\Delta}\in \mathbb{R}^{k \times d},\|{\boldsymbol{\Delta}}\|_{*} \leq \gamma}{\arg\max} \langle \boldsymbol{M}, {\boldsymbol{\Delta}} \rangle$$
\label{nuctrm}
\end{theorem}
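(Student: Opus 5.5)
The argument has two parts: an upper bound $\langle \boldsymbol{M},\boldsymbol{\Delta}\rangle \le \sigma_1\|\boldsymbol{\Delta}\|_*\le \gamma\sigma_1$ valid for every feasible $\boldsymbol{\Delta}$, and a check that $\gamma\boldsymbol{u}_1\boldsymbol{v}_1^\top$ is feasible and attains $\gamma\sigma_1$. This is the statement that the spectral norm is dual to the nuclear norm, with the rank-one matrix $\boldsymbol{u}_1\boldsymbol{v}_1^\top$ serving as a maximizer.

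\emph{Attainment.} The candidate $\gamma\boldsymbol{u}_1\boldsymbol{v}_1^\top$ has a single nonzero singular value $\gamma$, since $\boldsymbol{u}_1,\boldsymbol{v}_1$ are unit vectors, so $\|\gamma\boldsymbol{u}_1\boldsymbol{v}_1^\top\|_*=\gamma$ and it is feasible. Using $\langle \boldsymbol{A},\boldsymbol{B}\rangle=\mathrm{tr}(\boldsymbol{A}^\top\boldsymbol{B})$, we get
\[
\langle \boldsymbol{M},\gamma\boldsymbol{u}_1\boldsymbol{v}_1^\top\rangle=\gamma\,\boldsymbol{u}_1^\top\boldsymbol{M}\boldsymbol{v}_1=\gamma\sum_i\sigma_i(\boldsymbol{u}_1^\top\boldsymbol{u}_i)(\boldsymbol{v}_i^\top\boldsymbol{v}_1)=\gamma\sigma_1 ,
\]
where the last equality uses orthonormality of the singular vectors.

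\emph{Upper bound.} Take any $\boldsymbol{\Delta}$ with $\|\boldsymbol{\Delta}\|_*\le\gamma$ and write its SVD as $\boldsymbol{\Delta}=\sum_j s_j\boldsymbol{a}_j\boldsymbol{b}_j^\top$, with $s_j\ge 0$ and unit vectors $\boldsymbol{a}_j,\boldsymbol{b}_j$. By bilinearity,
\[
\langle \boldsymbol{M},\boldsymbol{\Delta}\rangle=\sum_j s_j\,\boldsymbol{a}_j^\top\boldsymbol{M}\boldsymbol{b}_j .
\]
Each term satisfies $\boldsymbol{a}_j^\top\boldsymbol{M}\boldsymbol{b}_j\le\|\boldsymbol{a}_j\|\,\|\boldsymbol{M}\boldsymbol{b}_j\|\le\sigma_1$, by Cauchy--Schwarz and the variational characterization $\sigma_1=\max_{\|\boldsymbol{b}\|=1}\|\boldsymbol{M}\boldsymbol{b}\|$. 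Summing gives $\langle \boldsymbol{M},\boldsymbol{\Delta}\rangle\le\sigma_1\sum_j s_j=\sigma_1\|\boldsymbol{\Delta}\|_*\le\gamma\sigma_1$.

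The two parts together show that $\gamma\boldsymbol{u}_1\boldsymbol{v}_1^\top$ attains the maximum value $\gamma\sigma_1$, so it lies in the argmax. The only nonroutine ingredient is the fact $\|\boldsymbol{M}\boldsymbol{b}\|\le\sigma_1$ for unit $\boldsymbol{b}$. I will justify it by expanding $\boldsymbol{b}$ in the right singular basis, extended to an orthonormal basis of $\mathbb{R}^d$: then $\|\boldsymbol{M}\boldsymbol{b}\|^2=\sum_i\sigma_i^2(\boldsymbol{v}_i^\top\boldsymbol{b})^2\le\sigma_1^2$. Everything else is direct computation. I will also note that the maximizer is not unique when $\sigma_1$ is repeated, which is why the statement uses $\in$ rather than equality.
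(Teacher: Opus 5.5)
Your proposal is correct and follows the paper's proof: an upper bound $\langle \boldsymbol{M},\boldsymbol{\Delta}\rangle\le\sigma_1\|\boldsymbol{\Delta}\|_*\le\gamma\sigma_1$ from nuclear/spectral norm duality, plus a check that $\gamma\boldsymbol{u}_1\boldsymbol{v}_1^\top$ is feasible and attains $\gamma\sigma_1$. The only difference is that you prove the duality inequality directly, via the SVD of $\boldsymbol{\Delta}$ and the bound $\|\boldsymbol{M}\boldsymbol{b}\|\le\sigma_1$, whereas the paper simply cites it.
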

\begin{proof}
By duality of nuclear norm and spectral norm:
\[
\langle \boldsymbol{M}, \boldsymbol{\Delta} \rangle \leq \||\boldsymbol{M}|\|_\infty \cdot \|\boldsymbol{\Delta}\|_* = \sigma_1 \cdot \|\boldsymbol{\Delta}\|_* \leq \sigma_1 \gamma.
\]
For $\boldsymbol{\Delta}^* = \gamma {u}_1{v}_1^\top$, we have $\|\boldsymbol{\Delta}^*\|_* = \gamma \cdot \|{u}_1 {v}_1^\top\|_* = \gamma$ and $\langle \boldsymbol{M}, \boldsymbol{\Delta}^* \rangle = \gamma \text{tr}(\boldsymbol{M}^\top {u}_1 {v}_1^\top) = \gamma \sigma_1 \|u_1\|^2\|v_1\|^2 =\gamma \sigma_1$, so we finished the proof. Furthermore, if $\sigma_1 > \sigma_2$, $\gamma u_1 v_1^{T}$ is the unique solution.
\end{proof}

By Theorem ~\ref{nuctrm}, we can introduce the update rule of \NucGD\ in Algorithm ~\ref{alg:nucgd}.

\begin{algorithm}[t!]
	\caption{\NucGD(Analytic view)}
	\label{alg:nucgd}
	\begin{algorithmic}[0]
	\State \textbf{Require:} stepsize $\gamma$; momentum parameter $0 \leq \mu < 1$.
    \State \textbf{Initialize:} initial weight $\bm{W}_0 \in \mathbb{R}^{k \times d}$,  momentum $\bm{M}_{-1}= 0_{k \times d}$.
    \For{$t = 0, \ldots, T - 1$}
		\State $\bm{G}_t \gets \bm{\nabla} f(\bm{W}_t)$
        \State $\bm{M}_t \gets \mu\bm{M}_{t-1}+(1-\mu)\bm{G}_t$
        \State $\bm{U}_t,\bm{\Sigma}_t,\bm{V}_t \gets \textbf{SVD}(\bm{M}_t)$
        \State $\bm{\Delta}_t \gets \gamma\bm{U}_t[:,1]\bm{V}_t[1,:]$
        \State $\bm{W}_{t+1} \gets \bm{W}_{t}-\bm{\Delta}_t$
    \EndFor
    \State\Return ${\{\bm{W}_t\}}_{t=0}^{T}$
	\end{algorithmic}
\end{algorithm}

\NucGD\ as \textbf{low rank projections}: though the analytic iteration of \NucGD\ requires full SVD decomposition, we can rewrite the computation of steepest descent direction by following observation:
\begin{remark}
For matrix $\boldsymbol{M} \in \mathbb{R}^{k \times d}$ with its SVD decomposition $\displaystyle \boldsymbol{M}=\sum_{i=1}^{\min\{k,d\}} \sigma_i u_iv_i^\top$, we have:
$$u_1v_1^\top=\frac{u_1u_1^\top\boldsymbol{M}}{\|u_{1}^\top\boldsymbol{M}\|_2}$$
\label{remark1}
\end{remark}

\begin{proof}
Easy to compute $u_1 u_1^{\top}\bm{M}=\sigma_1u_1 u_1^\top u_1 v_1^{\top}=\sigma_1 u_1^\top v_1$, $\|u_1^\top \bm{M}\|=\sigma_1 \|u_1 u_1^\top v_1^\top\| = \sigma_1 \|v_1^\top\|=\sigma_1$. 
\end{proof}
By remark ~\ref{remark1}, the implementation of \NucGD \ only rely on the computation of its principle singular vector, which is also the \emph{principal eigenvector} of matrix $\boldsymbol{N}=\boldsymbol{M}\boldsymbol{M}^\top$.To estimate the dominant eigenvector of a matrix, \textbf{power iteration} is a widely used method featuring high computational efficiency and fast convergence. By successive matrix-vector multiplications and vector normalization, it can yield highly accurate dominant eigenvectors with minimal computational cost. 

We rely on the strategies used in \cite{vogels2019powersgd}, as momentum in adjacent steps usually have a high degree of similarity, we can use the estimated eigenvector of the previous step as a initialization of power iteration at current step, and further reduce the number of power iteration steps at every training step. Algorithm ~\ref{alg:nucgd-real} presents an efficient implementation \NucGD\ by doing one power iteration steps per training step rely on the historical estimations asynchronously. Furthermore, we establish a convergence result, stating that in practical conditions, the power iteration direction will converge to the exact descent direction of \NucGD\ . The details are in Appendix~\ref{proof}. 
\begin{algorithm}[t!]
	\caption{\NucGD\ with Power Iteration}
	\label{alg:nucgd-real}
	\begin{algorithmic}[0]
	\State \textbf{Require:} stepsize $\gamma$; momentum parameter $0 \leq \mu < 1$.
    \State \textbf{Initialize:} initial weight $\bm{W}_0 \in \mathbb{R}^{k \times d}$,  momentum $\bm{M}_{-1}= \bm{O}_{k \times d}$. Initial projection vector $\bm{p}_{-1} \sim \mathcal{N}(0,I_k) $
    \For{$t = 0, \ldots, T - 1$}
		\State $\bm{G}_t \gets \bm{\nabla} f(\bm{W}_t)$
        \State $\bm{M}_t \gets \mu\bm{M}_{t-1}+(1-\mu)\bm{G}_t$
        \State $\bm{p}_{t}\gets {\bm{M}_t\bm{M}_t^\top\bm{p}_{t-1}} /{\|\bm{M}_t\bm{M}_t^\top\bm{p}_{t-1}\|_2}$ \Comment{Do single step power iteration based on previous $p_{t-1}$}
        \State $\bm{\Delta}_t \gets \gamma\bm{p}_{t}\bm{p}_{t}^\top\bm{M}_t/{\|\bm{p}_{t}^\top\bm{M}_t\|_2}$ \Comment{Compute update direction based on Remark ~\ref{remark1}}
        \State $\bm{W}_{t+1} \gets \bm{W}_{t}-\bm{\Delta}_t$
    \EndFor
    \State\Return ${\{\bm{W}_t\}}_{t=0}^{T}$
	\end{algorithmic}
\end{algorithm}

\section{Experiments}
We verify the theoretical predictions of the NSD framework by reproducing the baseline results of Fan et al.~\cite{fan2025implicit} (see Appendix ~\ref{reproduction} for details).In this section, we focus on the analysis to extending settings.

\paragraph{Experimental setup.}
We generate a synthetic multiclass dataset by first sampling $k$ class centers $\{c_y\}_{y=1}^k$ i.i.d. from a standard normal distribution, and then sampling $n_{\text{per class}}$ points per class from $\mathcal{N}(c_y,\sigma^2 I_d)$.
In all experiments, we use $k=15$ classes, feature dimension $d=25$, $n_{\text{per class}}=50$ (thus $n=750$), noise level $\sigma=0.1$, and random seed $12344$.
We verify linear separability by solving the entrywise $\ell_\infty$ norm multiclass SVM feasibility problem.
We use a $k$-class linear classifier $f_W(x)=Wx$ (no bias) trained with cross-entropy loss.
Four optimizers (SignGD, NGD, Muon, NucGD) are evaluated.
Weights are initialized with the zero initialization.The theretical maximum margin solution is solved by \texttt{cvxpy}.
\paragraph{Metrics.} To quantify the implicit bias and convergence quality, we mainly employ two key metrics to compare the learned weight $\boldsymbol{W}$ (produced by the optimizer at step $T$) against the theoretical maximum margin solution $\boldsymbol{W}^*$ defined in Equation ~\eqref{maxmargin}:

\begin{itemize}
    \item[(i)] \textbf{ Correlation: }We treat the weights as vectors in the parameter space and utilize the cosine similarity based on the Frobenius norm:
    \begin{equation*}
        \text{Corr}(\boldsymbol{W}, \boldsymbol{W}^*) = \frac{\langle \boldsymbol{W}, \boldsymbol{W}^* \rangle}{\|\boldsymbol{W}\|_F \|\boldsymbol{W}^*\|_F},
    \end{equation*}
    where $\langle \cdot, \cdot \rangle$ denotes the standard Euclidean inner product. 
    \item[(ii)] \textbf{Margin error:} The \emph{normalized maximum margin} according to  Equation~\eqref{relativemargin} is:
    \begin{equation*}
        m_{{\|\cdot\|}_{\cdot}}(\boldsymbol{W}) = \frac{m(\boldsymbol{W})}{\|\boldsymbol{W}\|_{\cdot}}, \quad \text{where } m(\boldsymbol{W}) = \min_{i} \left( \boldsymbol{W}_{y_i}^\top x_i - \max_{j \neq y_i} \boldsymbol{W}_{j}^\top x_i \right).
    \end{equation*}
    We can define the relative margin error corresponding to the specific norm $\|\cdot\|_{\cdot}$ as
    \begin{equation*}
    \epsilon_{{\|\cdot\|}_{\cdot}}(W)=\frac{\mid m_{{\|\cdot\|}_{\cdot}}(\boldsymbol{W}) - m_{{\|\cdot\|}_{\cdot}}(\boldsymbol{W}^{*})\mid}{m_{{\|\cdot\|}_{\cdot}}(\boldsymbol{W}^{*})}
    \end{equation*}
\end{itemize}
\subsection{Experiments of \NucGD}

We empirically investigate \NucGD, and compare it with other NSD variants induced by $\ell_2$, $\ell_\infty$, and spectral norms. The goal of these experiments is to verify (from optimization trajectories and solution structure) that the learned classifier follows the implicit-bias prediction: the algorithm tends to approach the
\emph{max-margin direction} associated with its underlying norm geometry, and the induced geometry leaves a visible footprint on the singular-value spectrum of the learned weight matrix.

\paragraph{Results (from Fig.~\ref{fig:nucgd_four}).}
\textbf{(a) Max-margin spectra.}
The singular-value spectra of max-margin solutions under different norms show qualitatively distinct structures:
the nuclear norm max-margin solution exhibits a pronounced spectral collapse in the tail,
with several trailing singular values dropping close to numerical zero, indicating a substantially smaller effective rank
than the alternatives.
In contrast, the max-margin solutions under $\ell_2/\ell_\infty$/spectral-type constraints keep a much flatter tail
(without an equally sharp drop), suggesting comparatively higher-rank structure.

\smallskip
\textbf{(b) Normalized-margin error along NucGD.}
The margin-error curves measured against different max-margin references reveal a clear ordering:
the nuclear norm error $\mathrm{Err}_{\ast}(t)$ decays the most and attains the smallest terminal gap,
while errors against the $\ell_2$, spectral, and $\ell_\infty$ max-margin directions level off at larger values.
This indicates that the trajectory produced by NucGD is closest (in the normalized-margin sense) to the nuclear norm
max-margin solution among the candidates considered.

\smallskip
\textbf{(c) Alignment (correlations) along NucGD.}
Consistent with the margin-error view, the correlation curves show that NucGD achieves the highest alignment with the
nuclear norm max-margin direction and maintains the largest terminal correlation.
Alignments to $\ell_2$, spectral, and $\ell_\infty$ references saturate strictly lower, indicating that NucGD is not merely
approaching a generic max-margin direction but is selectively biased towards the nuclear norm one.

\smallskip
\textbf{(d) Final NSD solution spectra across algorithms.}
Finally, comparing the singular-value spectra of the \emph{final} solutions produced by different NSD algorithms,
the NucGD solution displays the strongest suppression of small singular values (the sharpest decay in the tail),
which matches the low-rank signature suggested by the nuclear max-margin spectrum in (a).
Other NSD variants (e.g., spectral and $\ell_\infty$) retain noticeably flatter tails, reflecting different structural biases.

\paragraph{Mechanistic interpretation (norm geometry $\Rightarrow$ low-rank updates).}
The above patterns can be explained directly from the geometry of steepest descent under the nuclear norm.
For a gradient matrix $\bm{G}_t=\bm{\nabla} \mathcal{L}(W_t)$, the NSD direction under $\|\cdot\|_{\ast}$ solves( here we assume the maximum singular value of $\bm{G_t}$ is single):
\[
\Delta_t \in \arg\min_{\|\Delta\|_{\ast}\le 1}\ \langle \bm{G}_t,\Delta\rangle
\quad\Longleftrightarrow\quad
\Delta_t = -\,u_1 v_1^{\top},
\]
where $u_1,v_1$ are the top left/right singular vectors of $\bm{G}_t$ (Theorem~\ref{nuctrm}).
Thus, each NucGD step is driven predominantly by a \emph{rank-one} outer product aligned with the leading singular mode of the gradient.
Accumulating such updates naturally concentrates mass on a small number of dominant singular directions,
which provides a principled explanation for (i) the rapid emergence of spectral tail collapse and (ii) the preferential
alignment with the nuclear norm max-margin direction observed in (b)--(c).
Importantly, this mechanism is specific to the nuclear/spectral dual pair and does not hold for $\ell_2$ or entrywise $\ell_\infty$ geometries,
hence the distinct spectra and alignment behavior across different NSD algorithms.

\begin{figure}[H]
\centering
\begin{minipage}[t]{0.24\textwidth}
  \centering
  \includegraphics[width=\linewidth]{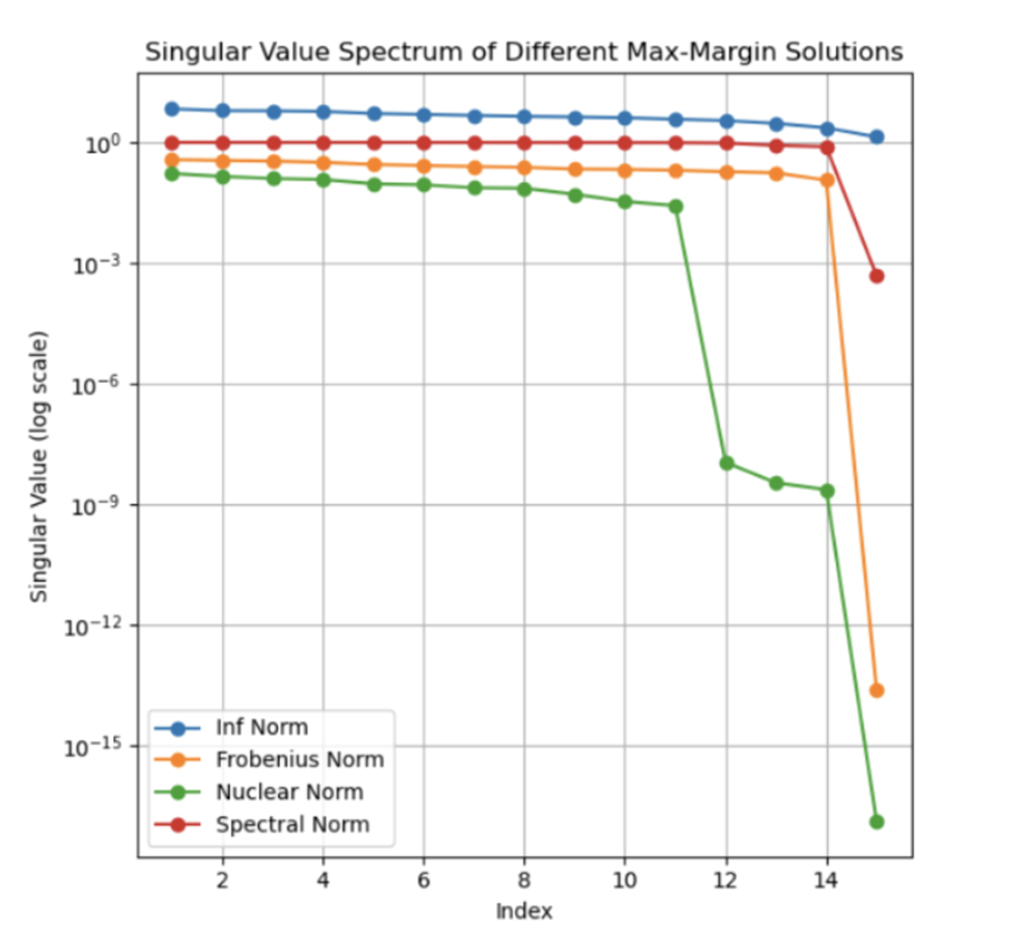}\\[-0.5ex]
  {\scriptsize (a) Max-margin spectra}
\end{minipage}\hfill
\begin{minipage}[t]{0.24\textwidth}
  \centering
  \includegraphics[width=\linewidth]{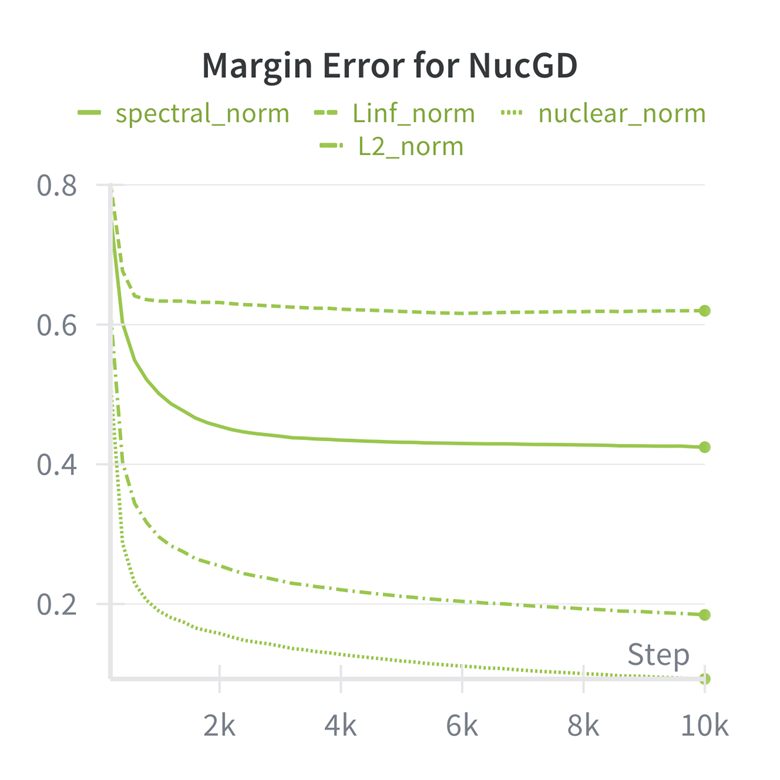}\\[-0.5ex]
  {\scriptsize (b) Margin error (NucGD)}
\end{minipage}\hfill
\begin{minipage}[t]{0.24\textwidth}
  \centering
  \includegraphics[width=\linewidth]{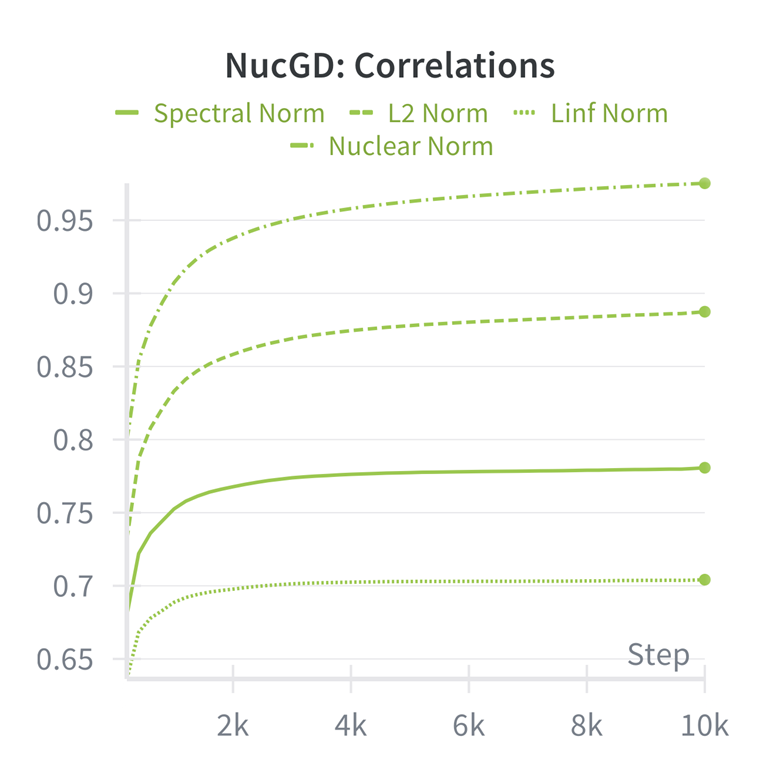}\\[-0.5ex]
  {\scriptsize (c) Correlations (NucGD)}
\end{minipage}\hfill
\begin{minipage}[b]{0.24\textwidth}
  \centering
  \includegraphics[width=\linewidth]{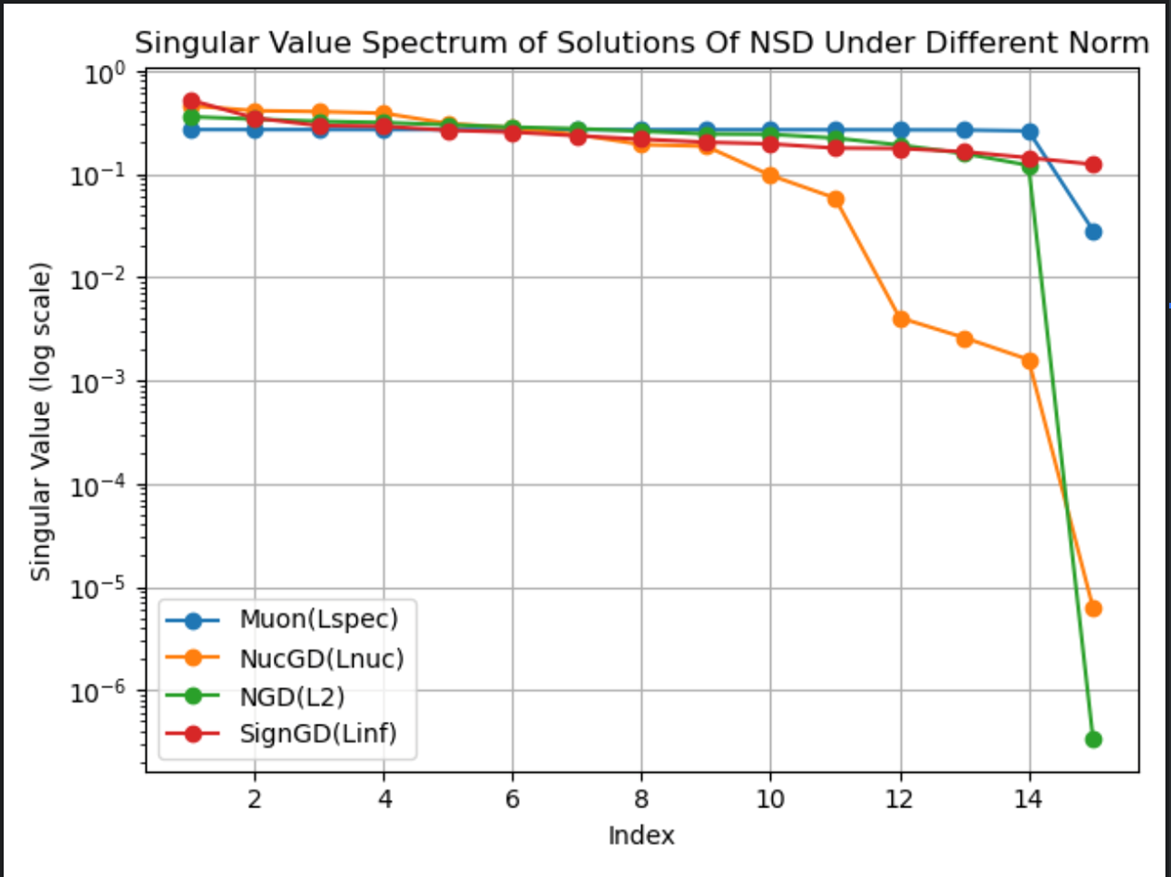}\\[-0.5ex]
  {\scriptsize (d) NSD solution spectra}
\end{minipage}

\caption{NucGD experiments: (a) singular value spectra of max-margin solutions under different norms;
(b) normalized-margin error along NucGD measured against different max-margin references;
(c) correlations between NucGD iterates and different max-margin directions;
(d) singular value spectra of final solutions produced by different NSD algorithms.}
\label{fig:nucgd_four}
\end{figure}

\subsection{Experiments under Stochastic Settings}

In this section, we study the impact of gradient noise on implicit bias. Specifically, we examine the effects of varying the mini-batch size $B$ and the momentum weight $\mu$.

For a fixed dataset, the mini-batch gradient
$
\hat {\bm{G}}_t=\frac{1}{B}\sum_{i\in\mathcal{B}_t}{\nabla} \ell_i(W_t)
$
serves as a stochastic estimator of the full gradient, whose variance typically decreases as $B$ increases (roughly $\propto 1/B$). 
Therefore, smaller batch sizes correspond to \emph{higher} gradient noise, while larger batch sizes lead to dynamics closer to full-batch training.

In ~\eqref{alg-nsd}, we introduce a parameter $\bm{M}_t$ to record the accumulated history information of first momentum, and mixture it with current gradient $\bm{G}_t$ with weight $\mu$. Under a stochastic setting, adding momentum can filter high frequency noise, which accelerate the gradient descent algorithm efficiently.Therefore, smaller momentum weights correspond to \emph{higher} gradient noise.

Two experiments are implemented:\newline
\textbf{(a)}We evaluate different batch sizes
$B\in\{1,5,50,250,750\}$, while keeping other hyperparameters fixed.\newline
\textbf{(b)}We evaluate different coefficients $\mu$ for both full-batch and mini-batch gradients. Note that the range of $\mu$ is adapted to each setting to ensure numerical stability.

\paragraph{Results of (a).}
Figures~\ref{fig:noise} visualize how the final alignment (correlation) and the normalized margin vary with the batch size.
Overall, the impact of noise is optimizer-dependent:\newline
(i) for \textbf{Muon}, the smallest batch size $B=1$ leads to unstable optimization (low accuracy and negative normalized margin), while moderate/larger batch sizes recover perfect classification and strong alignment;\newline
(ii) for \textbf{NGD}, larger batch sizes (less noise) consistently improve both the final normalized margin and correlation;\newline
(iii) for \textbf{NucGD}, smaller batch sizes yield \emph{higher} correlation, suggesting that a certain level of noise can promote alignment with the max-margin direction for this optimizer;\newline
(iv) for \textbf{SignGD}, the best correlation is attained at a small batch size ($B=5$), and then slightly decreases as $B$ increases.

\begin{figure}[H]
  \centering
  \subfigure[Margin error of Muon]{%
    \includegraphics[width=0.23\textwidth]{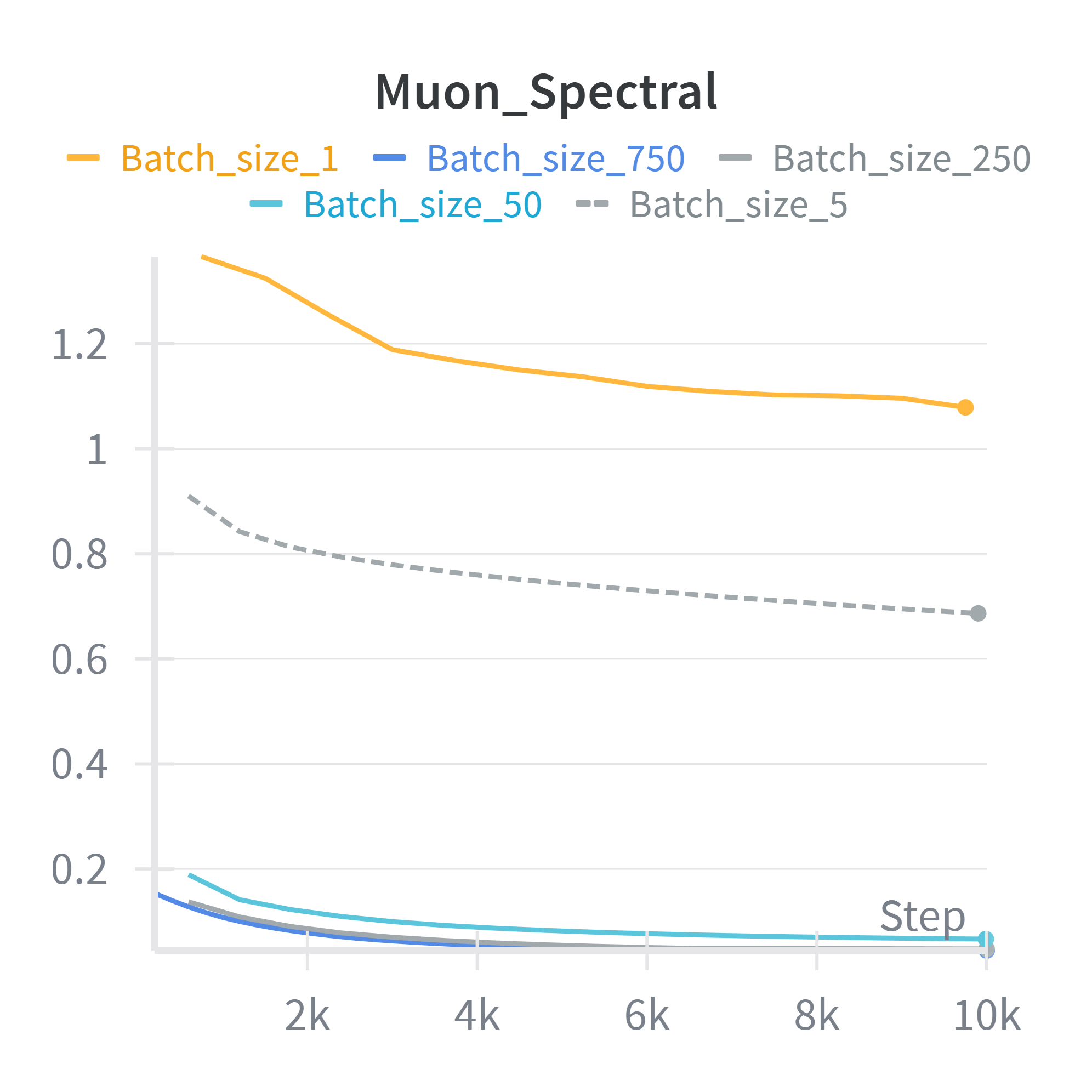}%
  }\hfill
  \subfigure[Margin error of NGD]{%
    \includegraphics[width=0.23\textwidth]{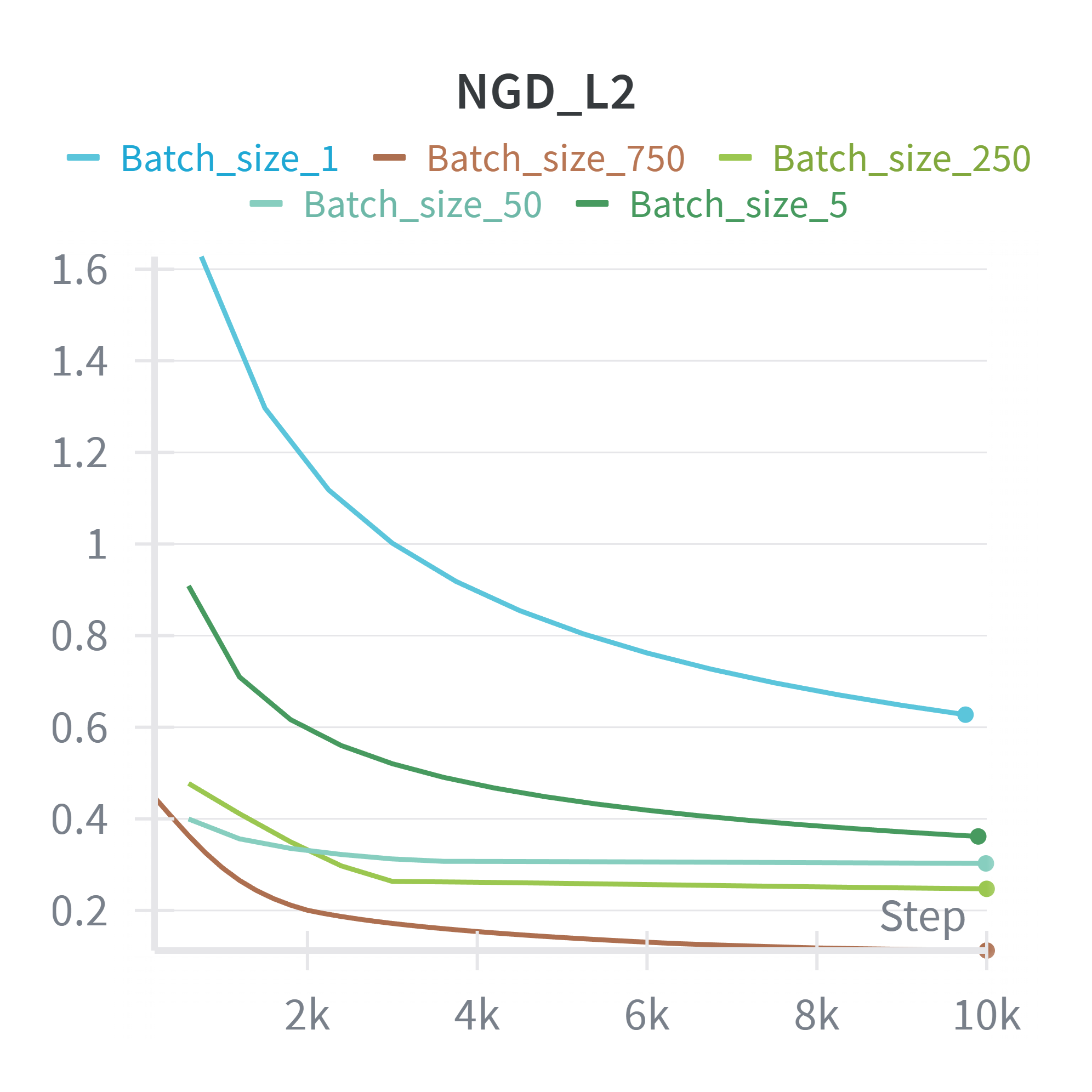}%
  }\hfill
  \subfigure[Margin error of NucGD]{%
    \includegraphics[width=0.23\textwidth]{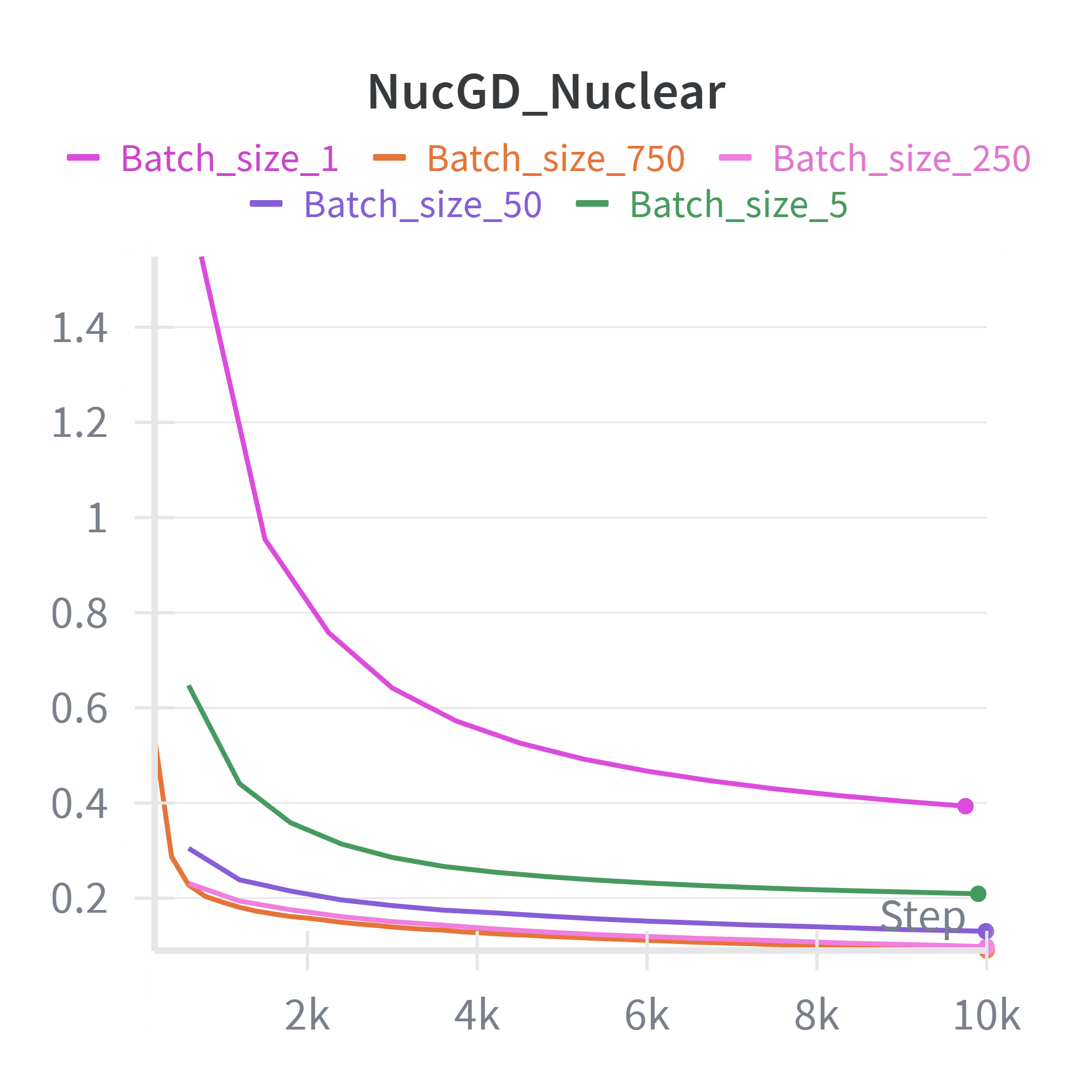}%
  }\hfill
  \subfigure[Margin error of SignGD]{%
    \includegraphics[width=0.23\textwidth]{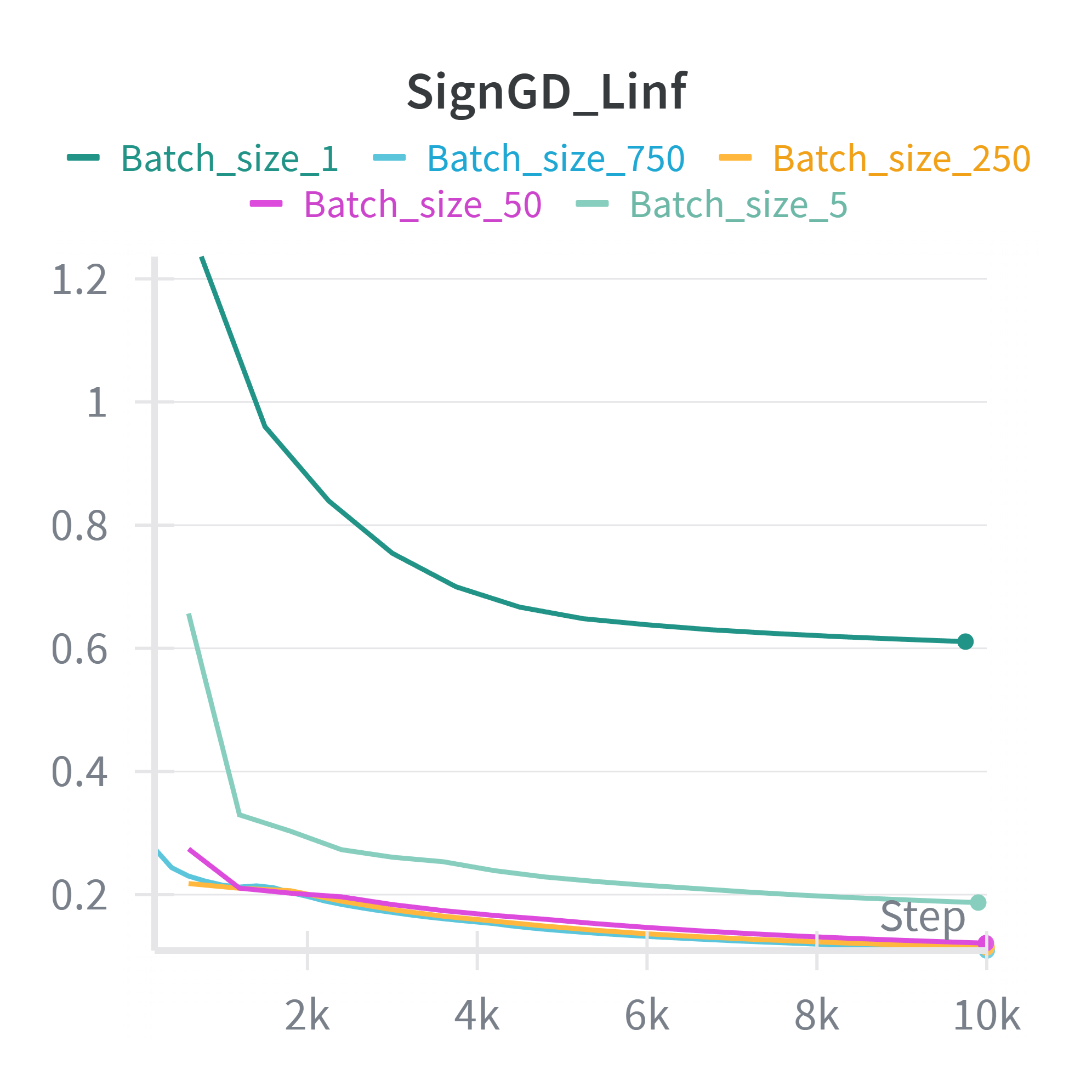}%
  }\\[2mm]
  \subfigure[Correlation of Muon]{%
    \includegraphics[width=0.23\textwidth]{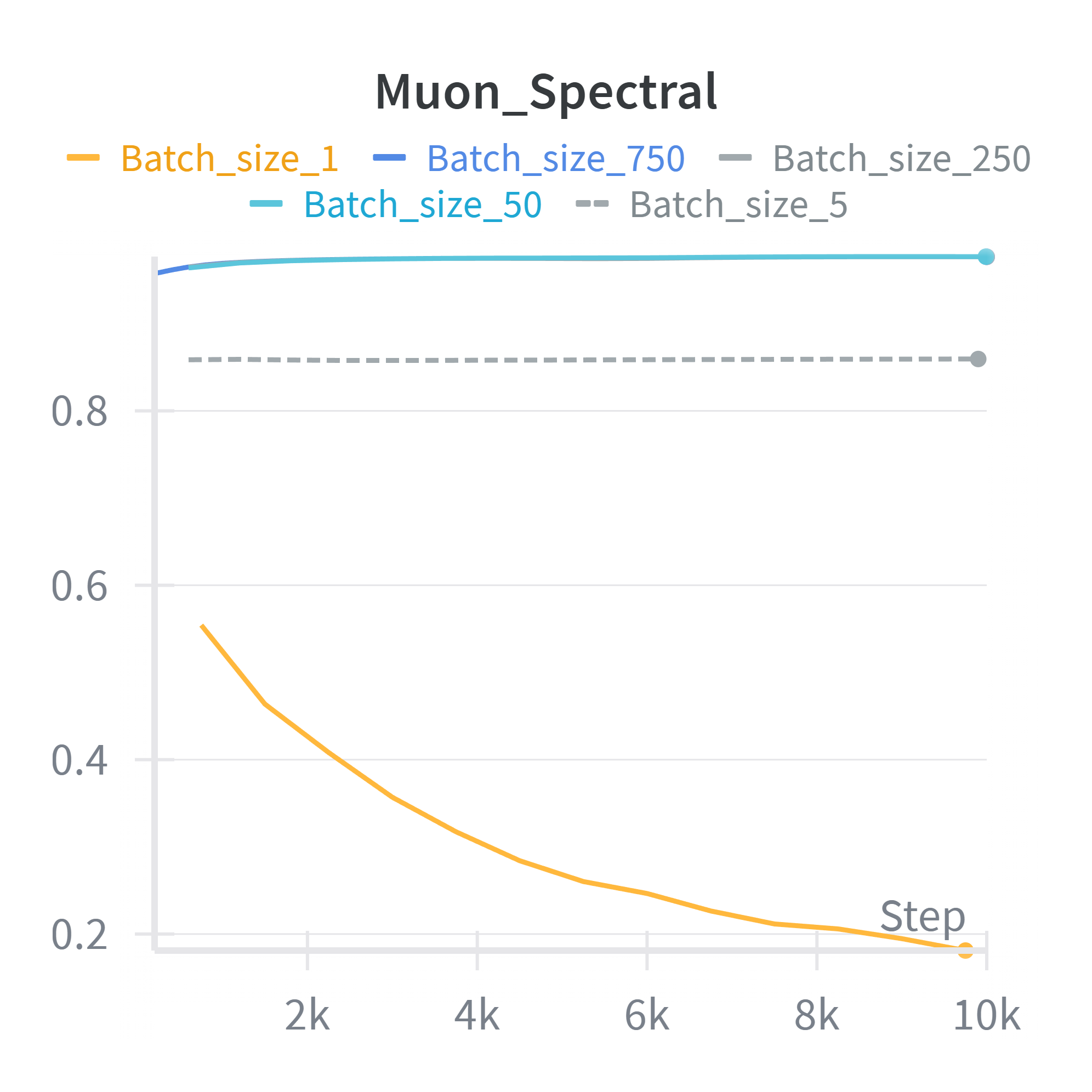}%
  }\hfill
  \subfigure[Correlation of NGD]{%
    \includegraphics[width=0.23\textwidth]{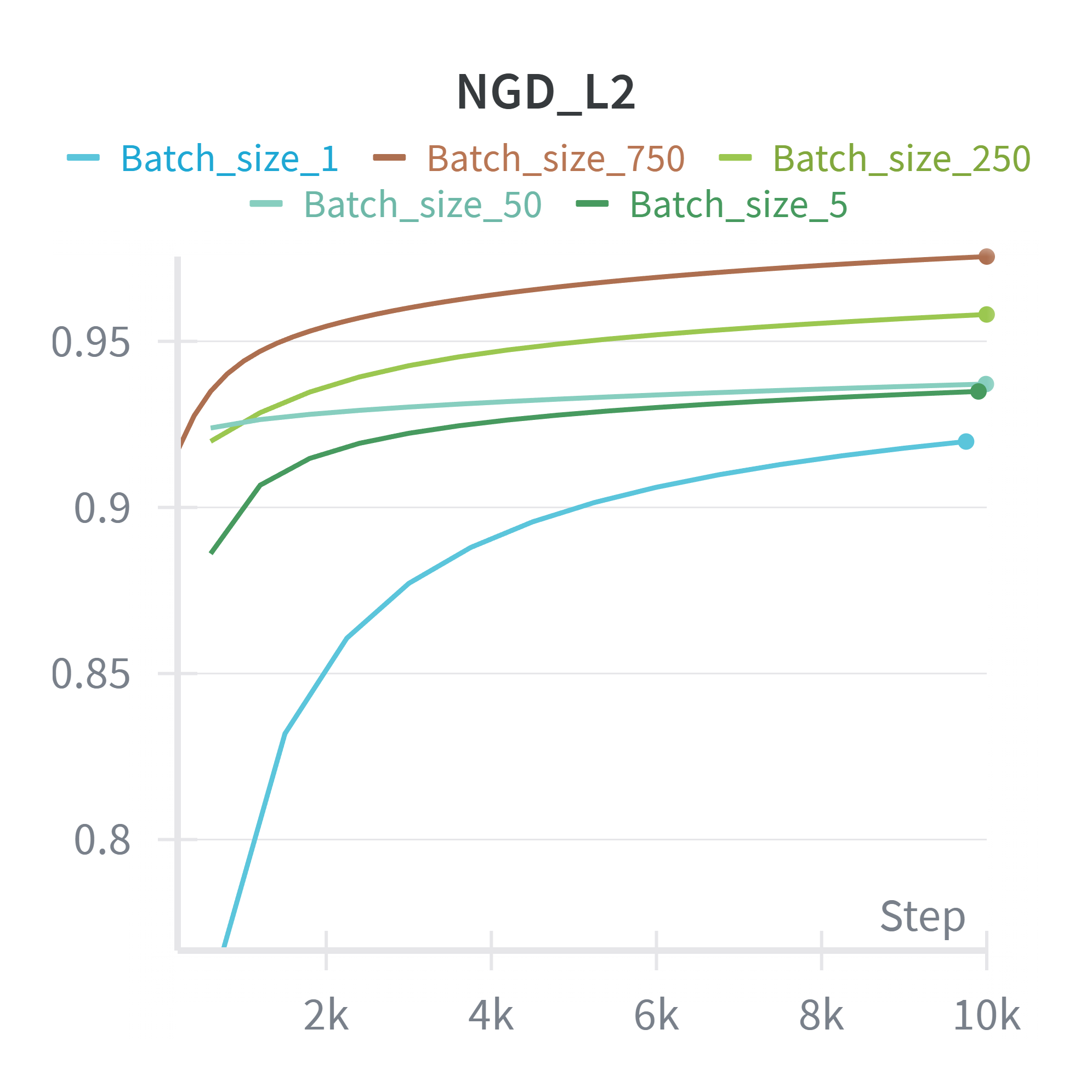}%
  }\hfill
  \subfigure[Correlation of NucGD]{%
    \includegraphics[width=0.23\textwidth]{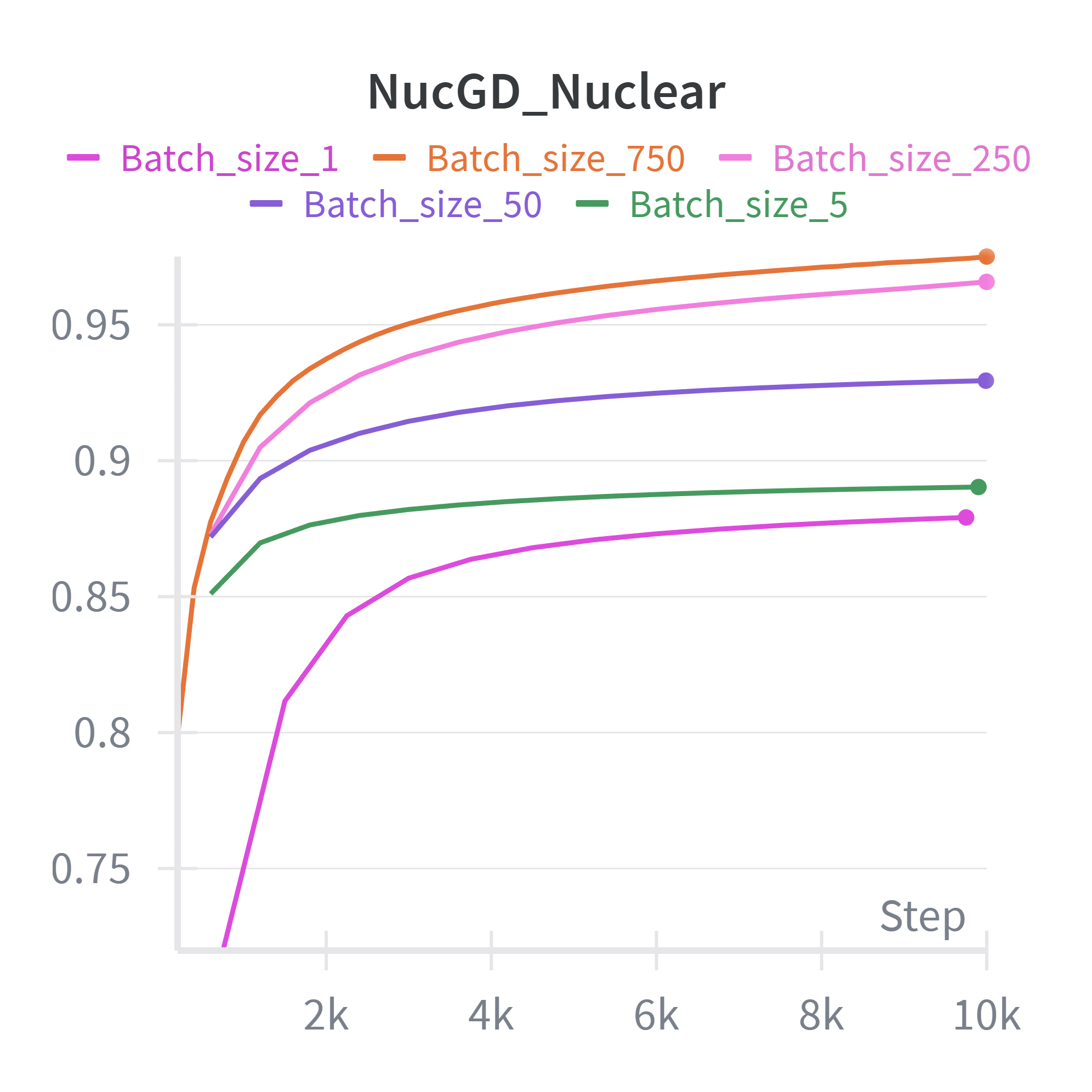}%
  }\hfill
  \subfigure[Correletion of SignGD]{%
    \includegraphics[width=0.23\textwidth]{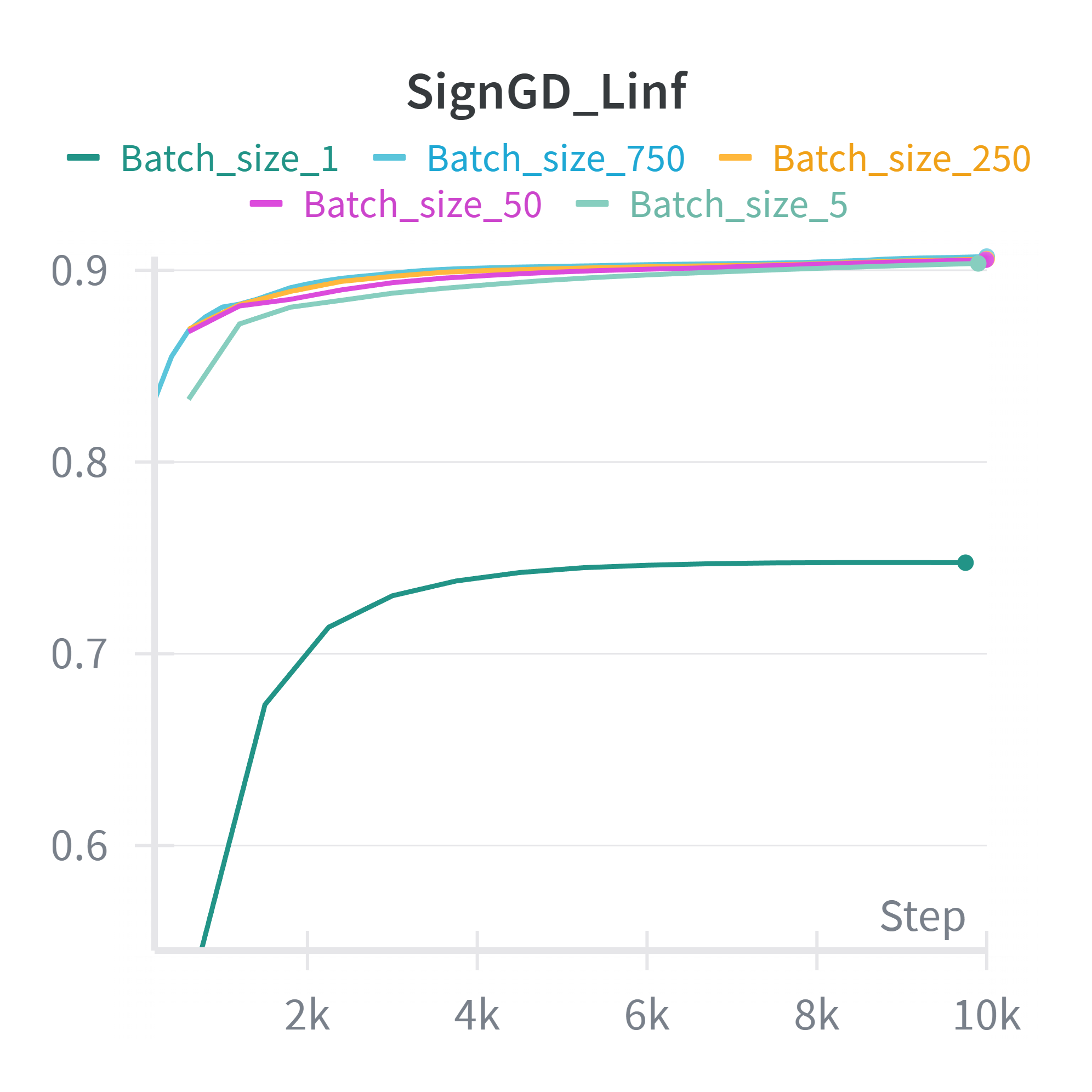}%
  }

  \caption{Effect of noise controlled by batch size $B$}
  \label{fig:noise}
\end{figure}

\paragraph{Results of (b).}
Figures~\ref{fig:momentum_minibatch} depict the variations in correlation and normalized margin with respect to the momentum weight $\mu$ with $B=50$. The results show that increasing $\mu$ significantly impacts the implicit bias: it leads to a lower normalized margin $\widehat{\gamma}$ while increasing the correlation.

\begin{figure}[H]
  \centering
  \subfigure[Margin errer of Muon]{%
    \includegraphics[width=0.23\textwidth]{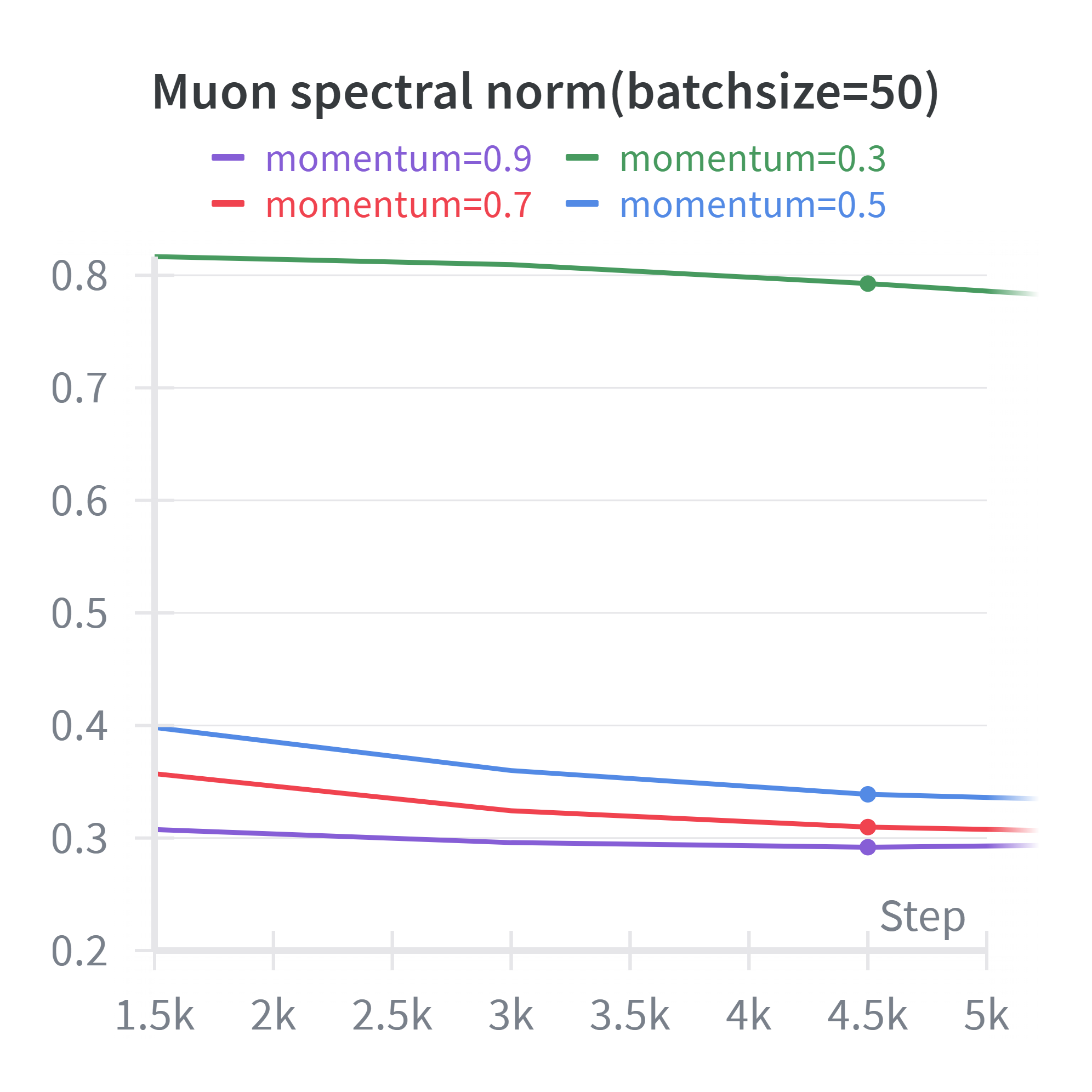}%
  }\hfill
  \subfigure[Margin errer of NGD]{%
    \includegraphics[width=0.23\textwidth]{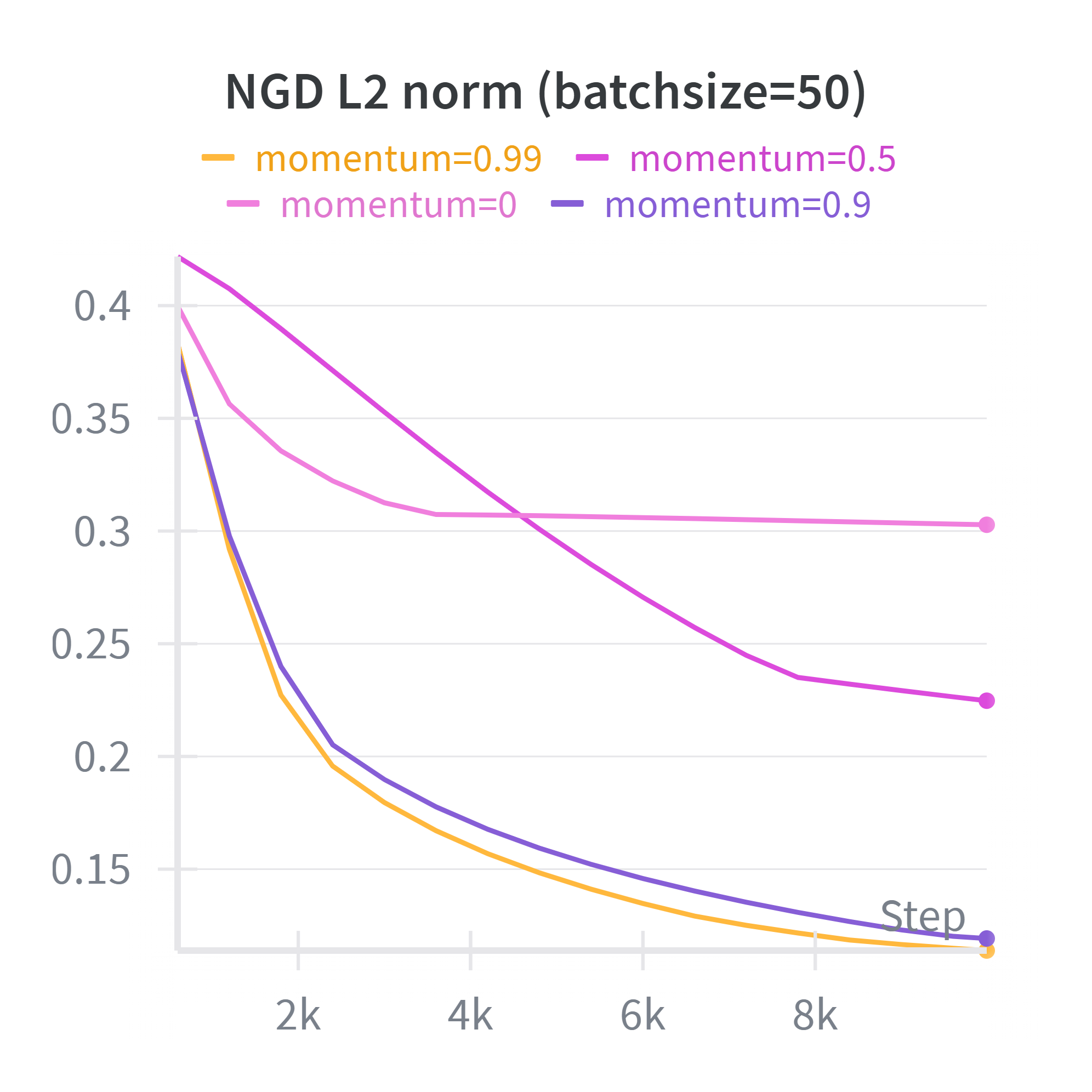}%
  }\hfill
  \subfigure[Margin errer of NucGD]{%
    \includegraphics[width=0.23\textwidth]{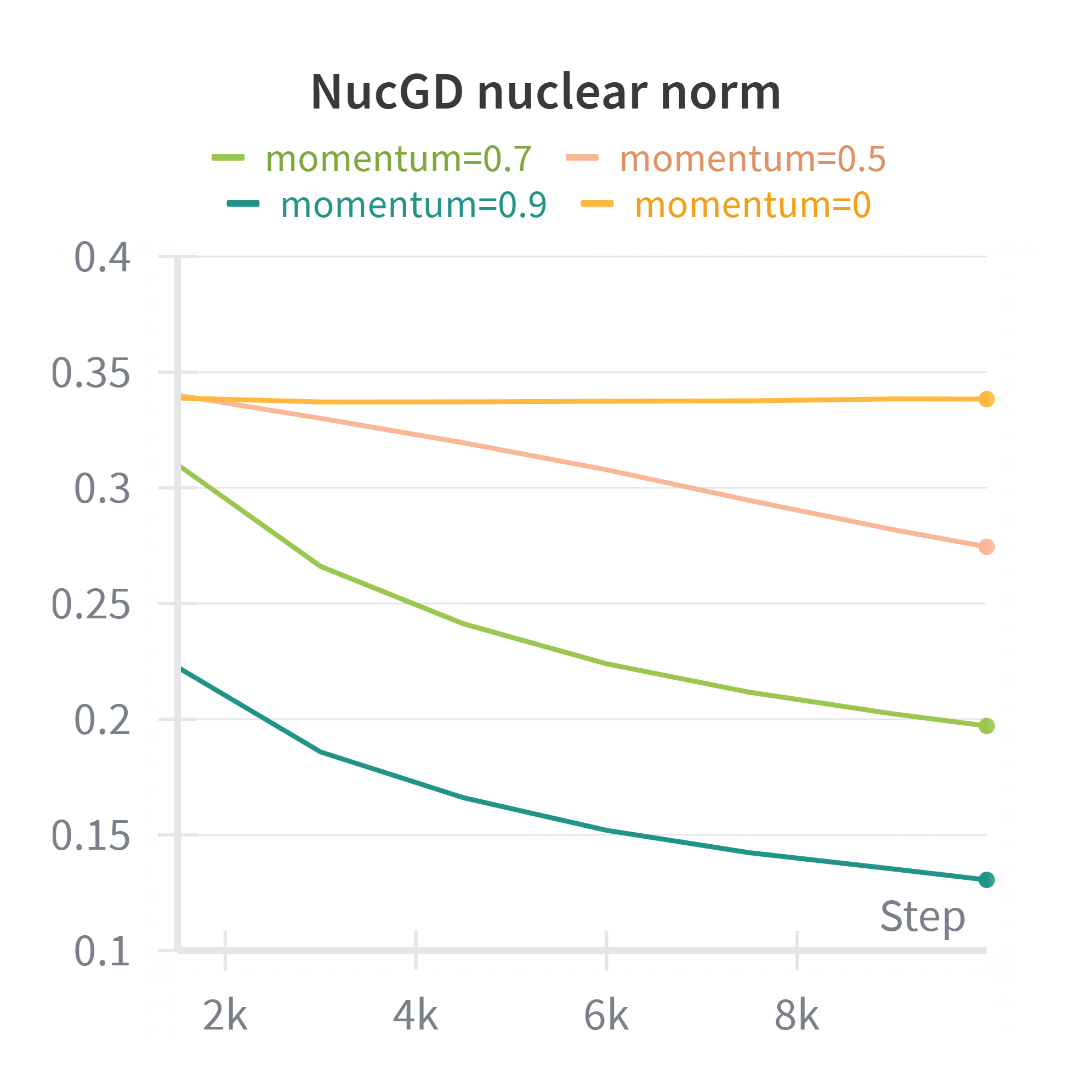}%
  }\hfill
  \subfigure[Margin errer of SignGD]{%
    \includegraphics[width=0.23\textwidth]{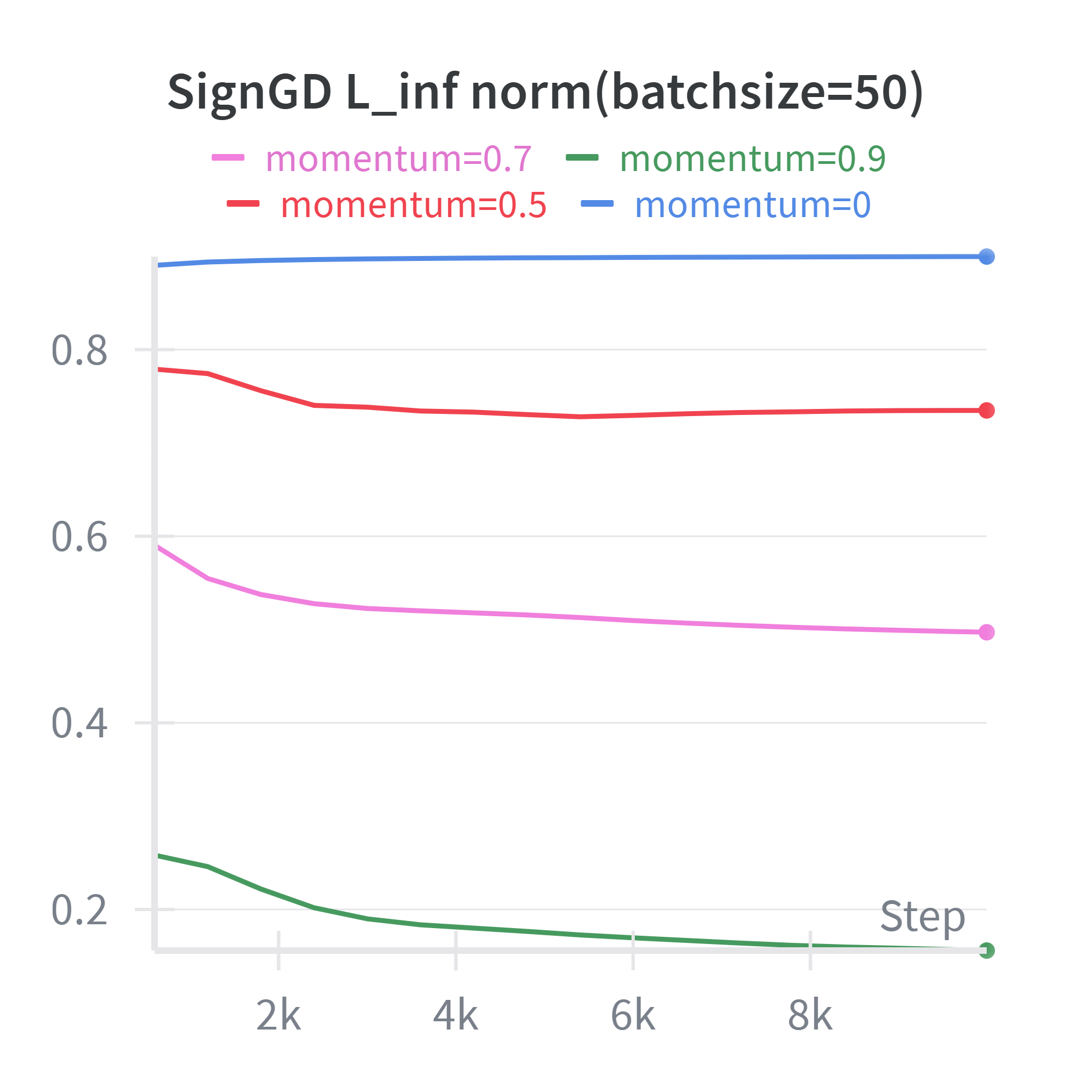}%
  }\hfill
  \\[2mm]
    \subfigure[Correlation of Muon]{%
    \includegraphics[width=0.23\textwidth]{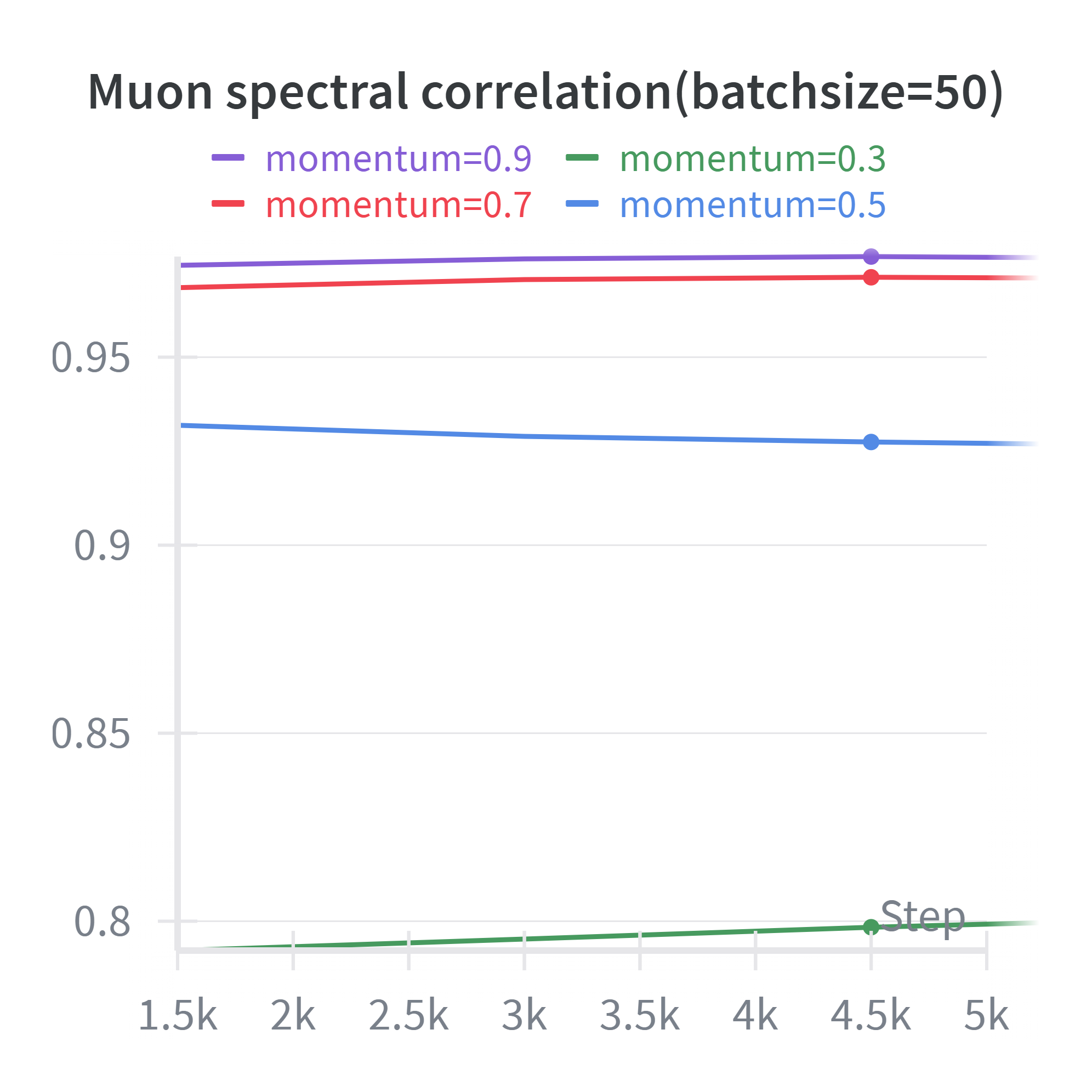}%
  }\hfill
  \subfigure[Correlation of NGD]{%
    \includegraphics[width=0.23\textwidth]{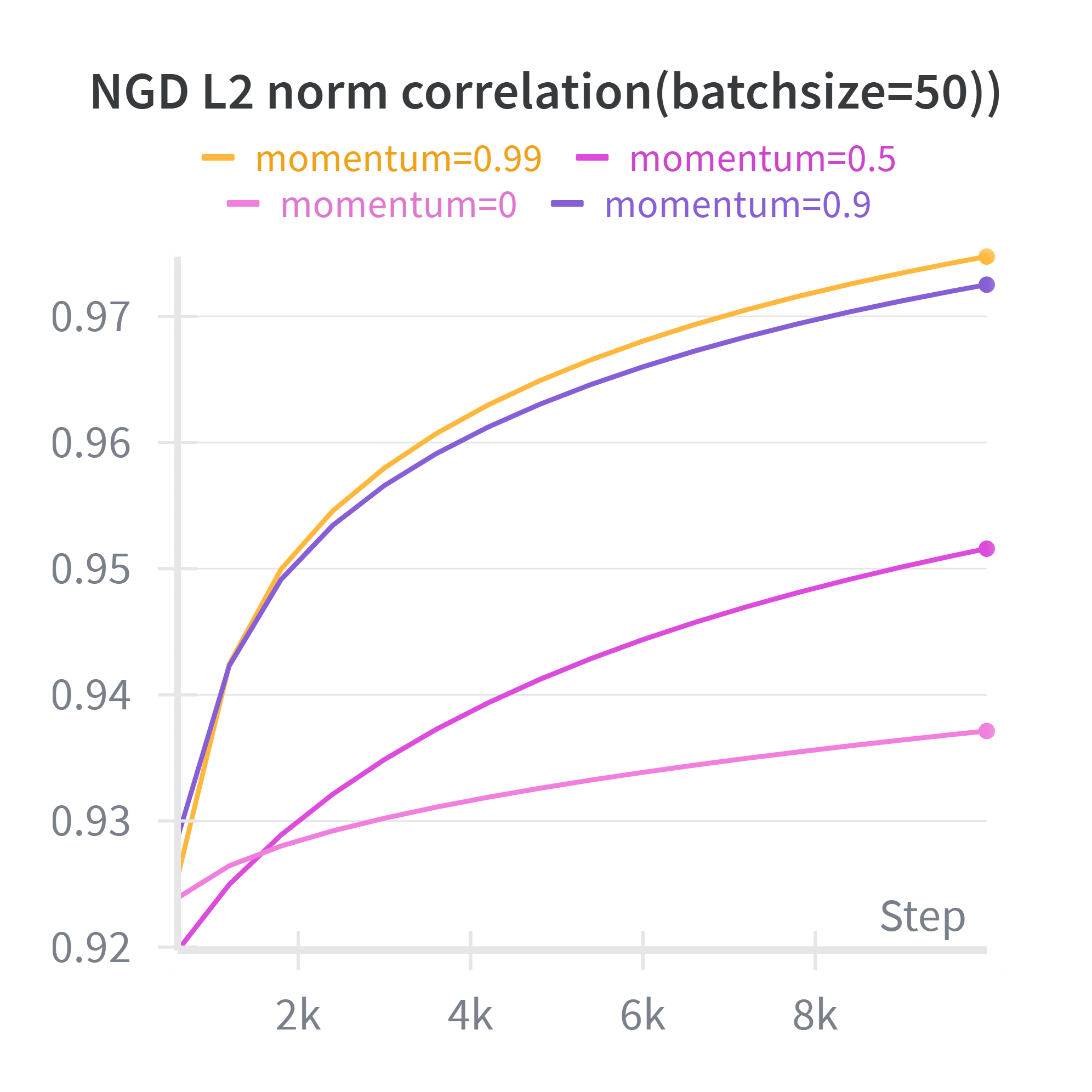}%
  }
  \subfigure[Correlation of NucGD]{%
    \includegraphics[width=0.23\textwidth]{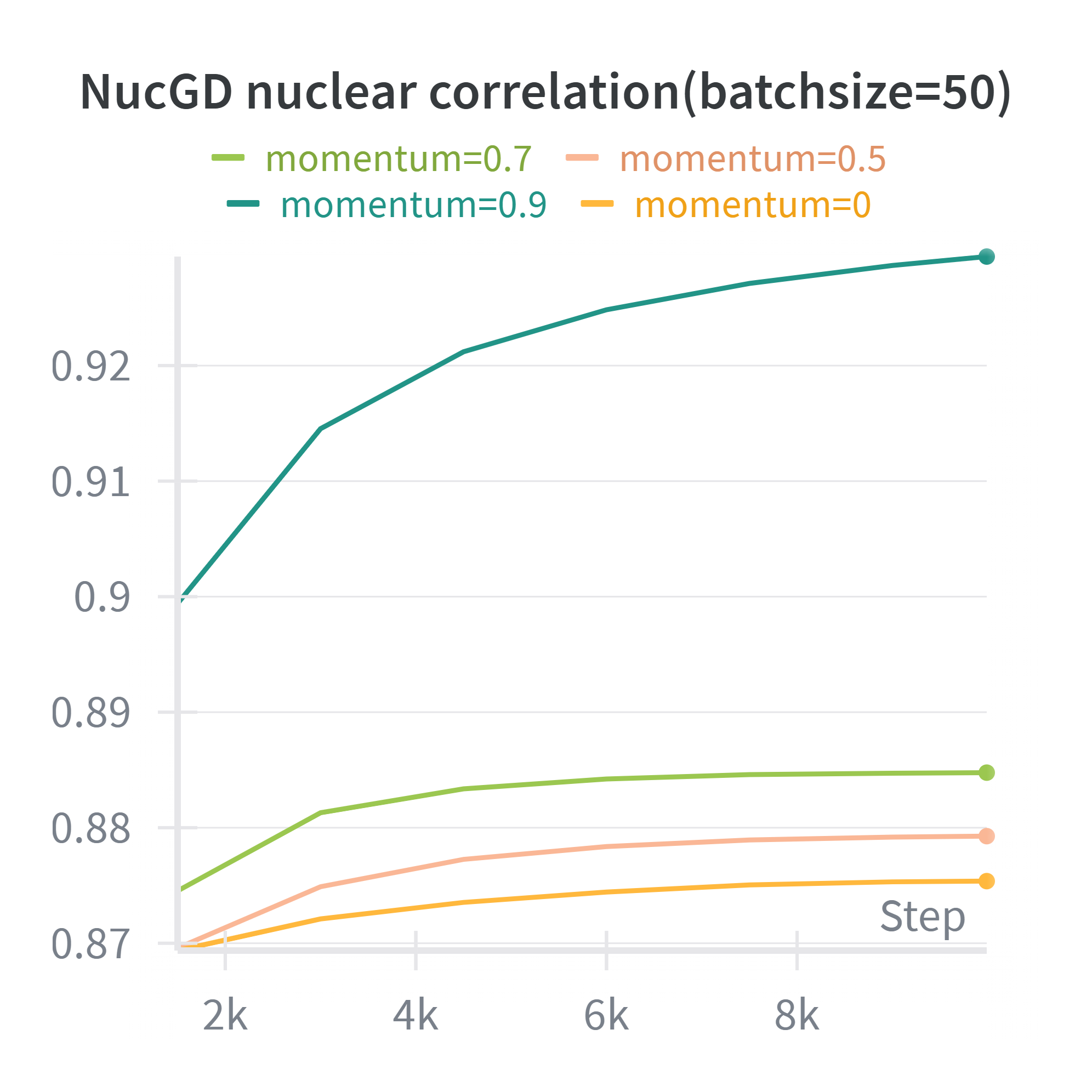}%
  }\hfill
  \subfigure[Coffelation of SignGD]{%
    \includegraphics[width=0.23\textwidth]{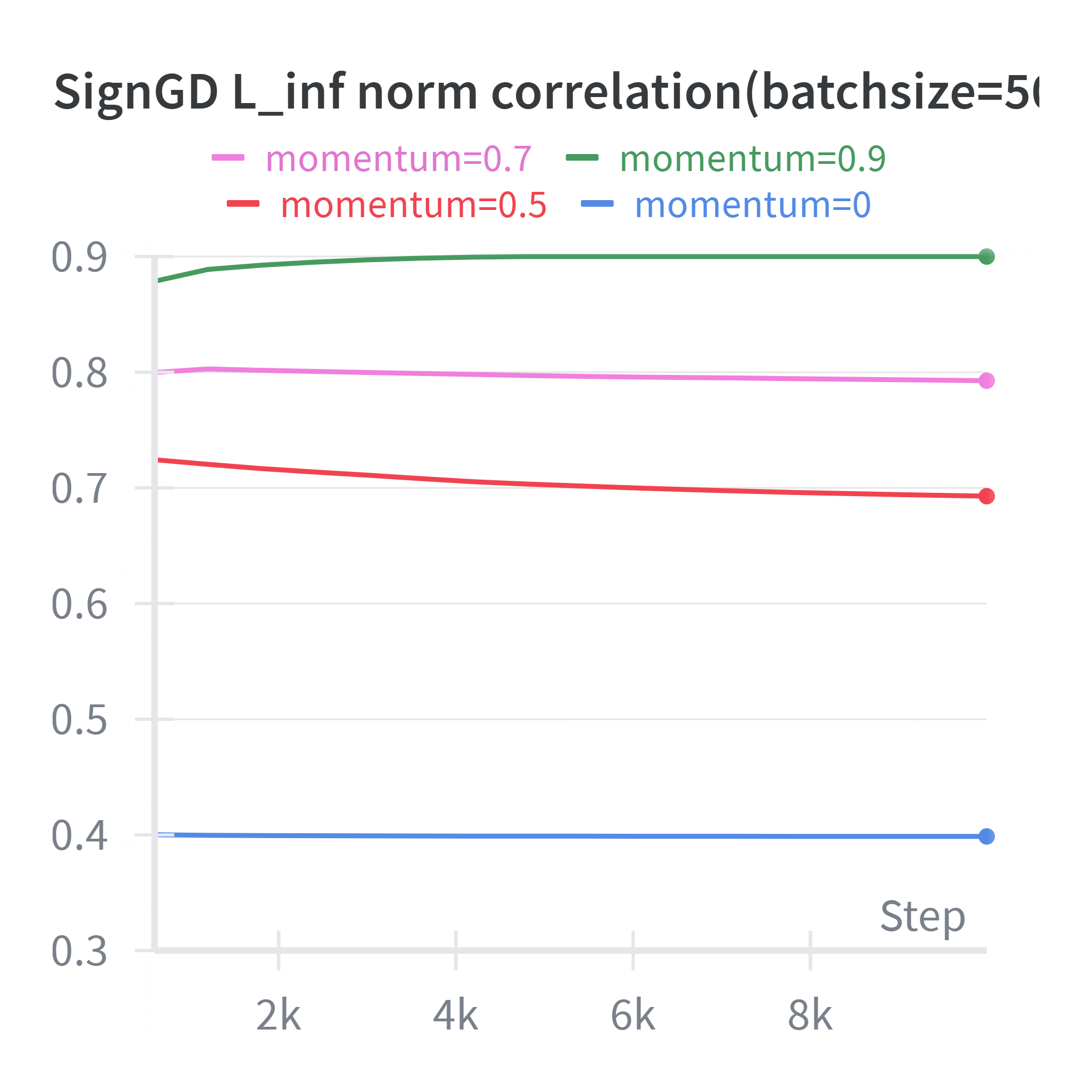}%
  }
  \caption{Effect of momentum weight $\mu$, \textbf{mini-batch} training}
  \label{fig:momentum_minibatch}
\end{figure}
This phenomenon is exclusive to the mini-batch setting. In Figure~\ref{fig:momentum_fullbatch}, we visualize the performance with different $\mu$ under full-batch gradient descent and observe that the impact is negligible: the implicit bias is significant under all levels of $\mu$. This suggests that $\mu$ influences implicit bias primarily by modulating gradient noise; consequently, in the absence of noise (full-batch), its effect diminishes.
\begin{figure}[H]
  \centering
  \begin{minipage}[b]{0.24\textwidth}
    \centering
    \includegraphics[width=\linewidth]{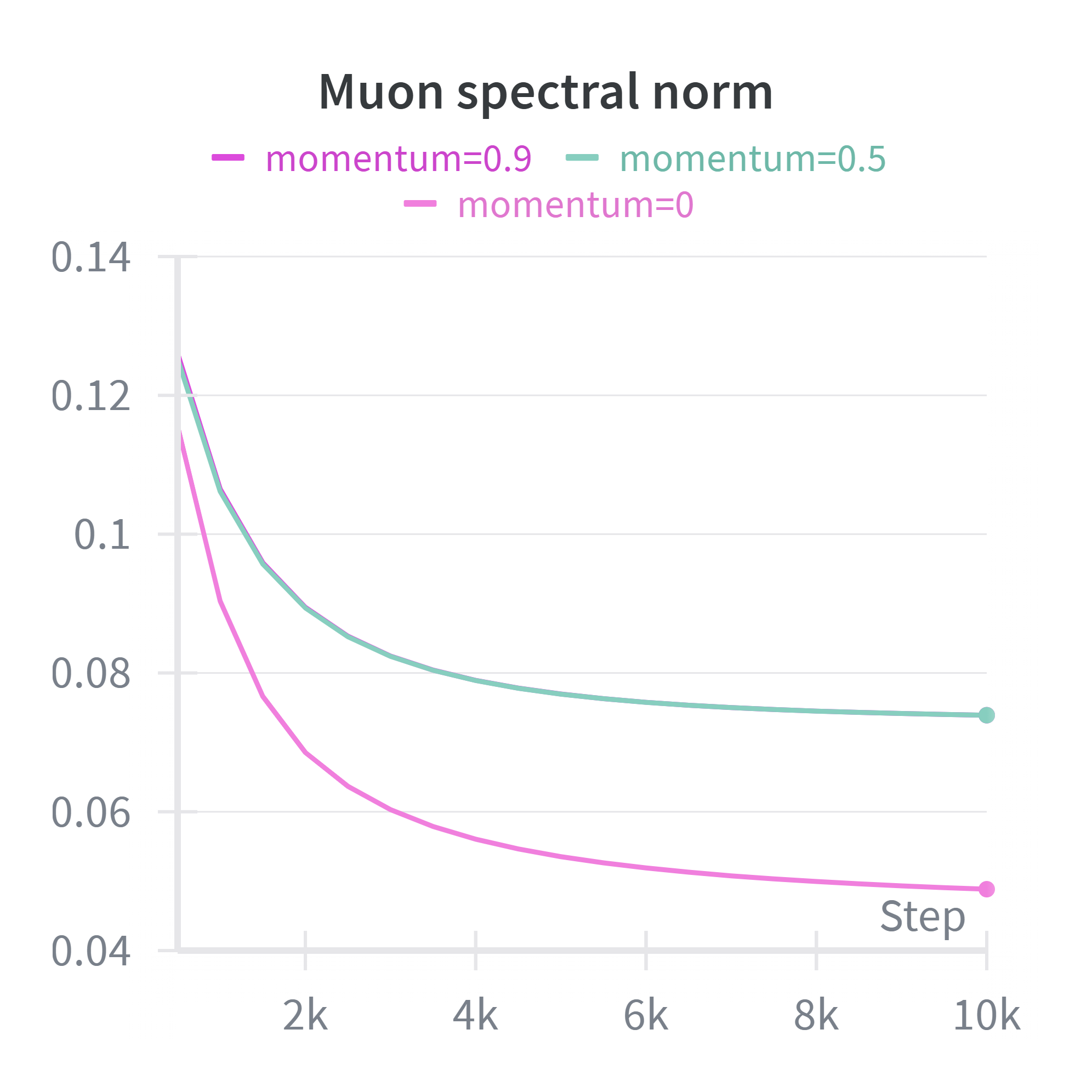}
    \centerline{\footnotesize (a) Muon: $\widehat{\gamma}$}
  \end{minipage}%
  \hfill
  \begin{minipage}[b]{0.24\textwidth}
    \centering
    \includegraphics[width=\linewidth]{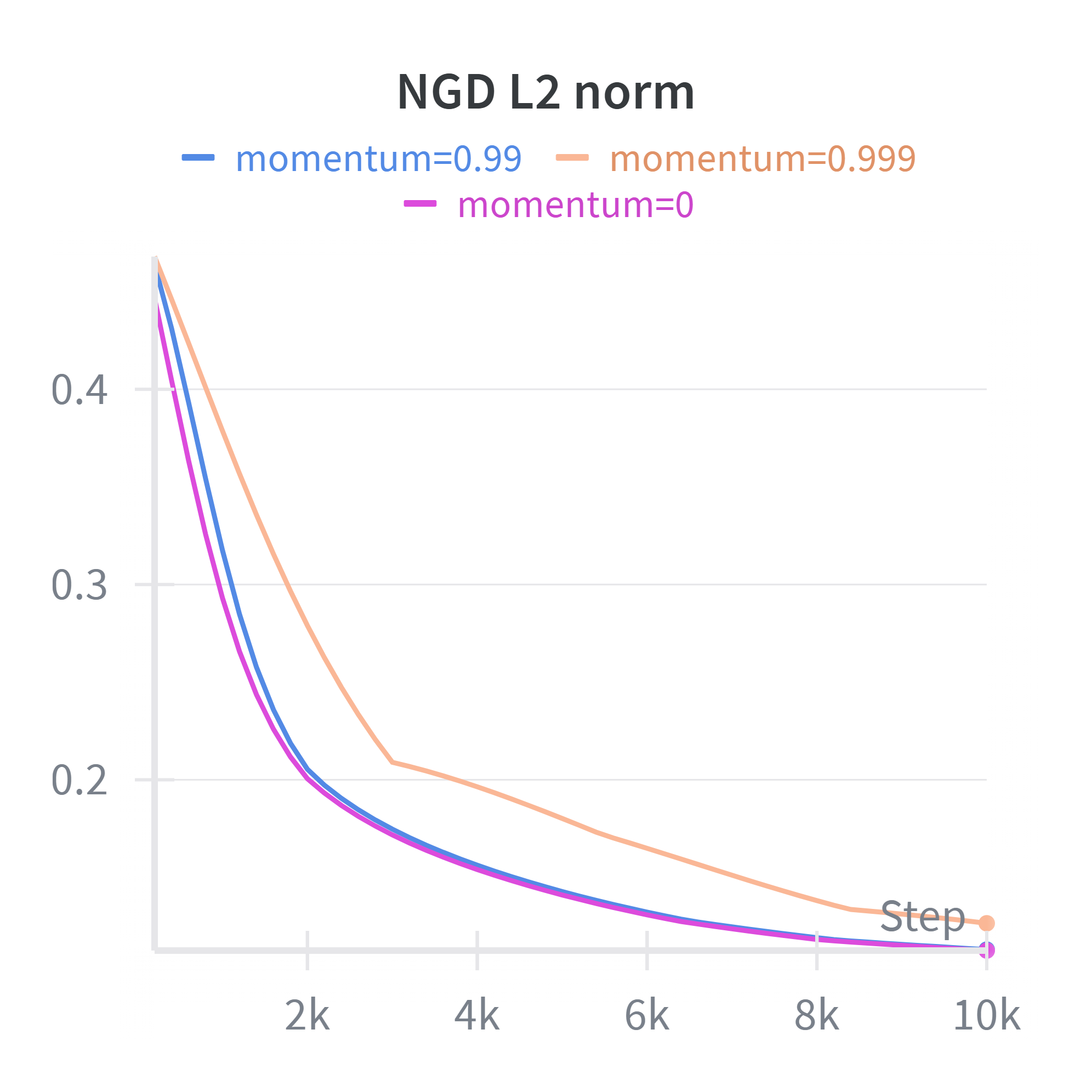}
    \centerline{\footnotesize (b) NGD: $\widehat{\gamma}$}
  \end{minipage}%
  \hfill
  \begin{minipage}[b]{0.24\textwidth}
    \centering
    \includegraphics[width=\linewidth]{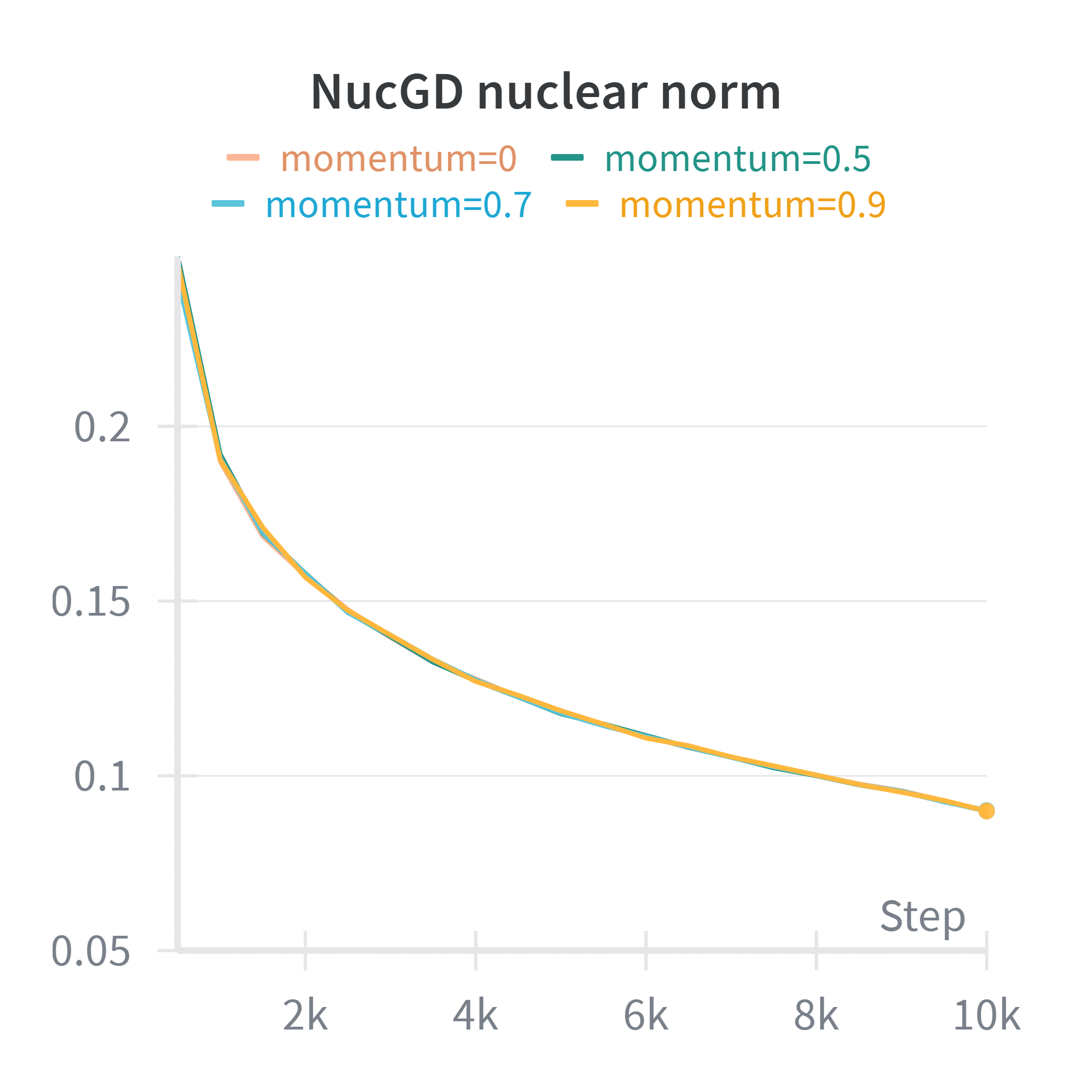}
    \centerline{\footnotesize (c) NucGD: $\widehat{\gamma}$}
  \end{minipage}%
  \hfill
  \begin{minipage}[b]{0.24\textwidth}
    \centering
    \includegraphics[width=\linewidth]{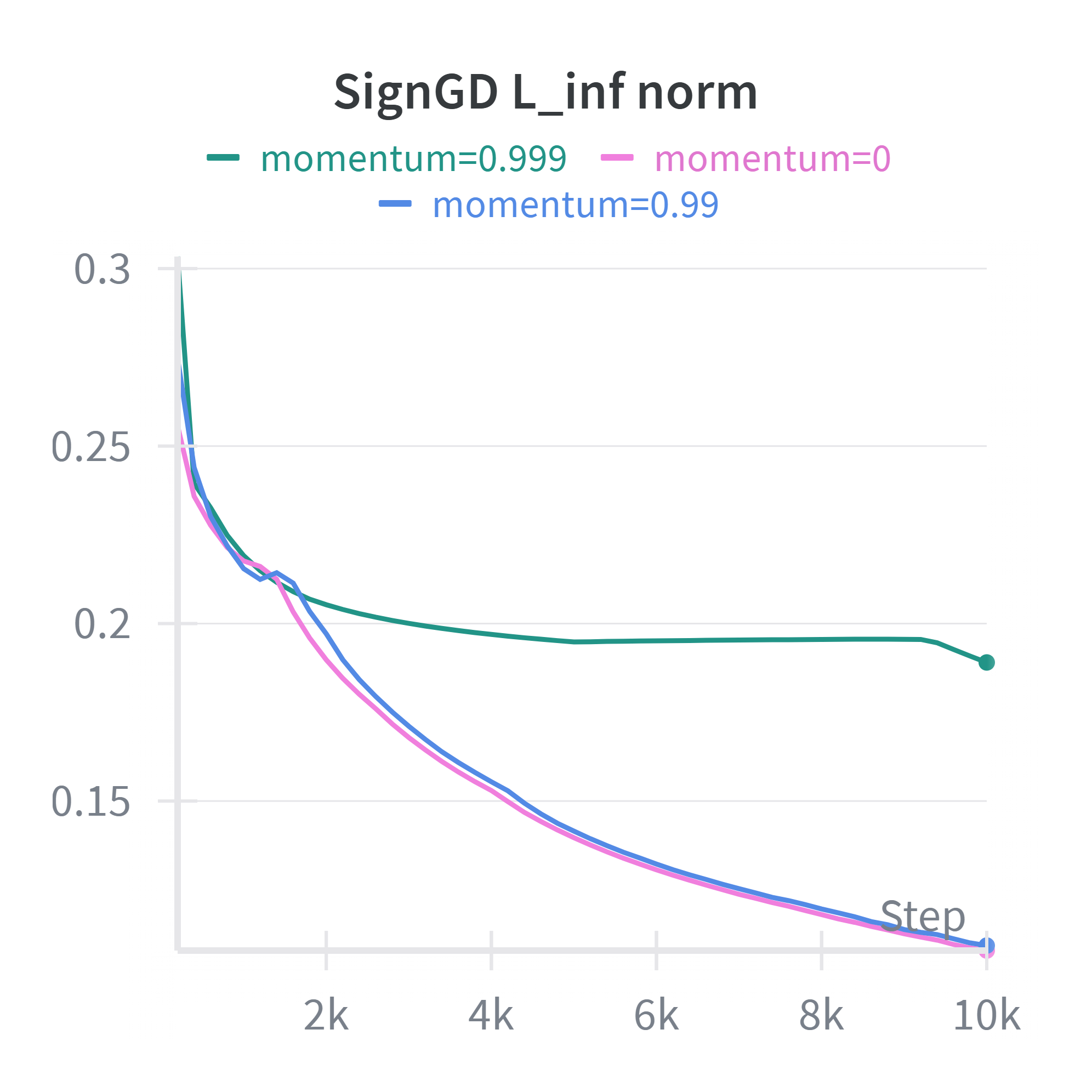}
    \centerline{\footnotesize (d) SignGD: $\widehat{\gamma}$}
  \end{minipage}

  \caption{Effect of momentum weight $\mu$, \textbf{full-batch} training}
  \label{fig:momentum_fullbatch}
\end{figure}
\vspace{-1mm}
\section{Conclusion and Future Work}

In this work, we studied the implicit bias of gradient-based optimization algorithms for multiclass linear classification on separable data through the lens of the Normalized Steepest Descent (NSD) framework, and further investigated the effect of gradient noise induced by mini-batch sampling and momentum.

Building on the theoretical analysis of Fan et al.~\cite{fan2025implicit}, we unified a broad class of optimization methods—including NGD, SignGD, and matrix-preconditioned methods such as Muon—by interpreting them as steepest descent under different norm constraints. Under this unified view, we empirically verified that NSD algorithms converge in direction to the corresponding norm-induced maximum-margin solutions.

Motivated by the benefits of low-rank structure, we extended the NSD framework to the nuclear norm setting and proposed a new algorithm, \NucGD. By exploiting the duality between the nuclear and spectral norms, we showed that \NucGD\ updates along the leading singular directions of the momentum matrix. To avoid the computational cost of full SVD, we further derived an efficient implementation based on asynchronous power iteration.

Extensive experiments demonstrate that \NucGD\ exhibits a clear implicit bias toward the maximum nuclear norm margin solution. Compared with NSD variants under $\ell_2$, $\ell_\infty$, and spectral norms, \NucGD\ achieves stronger alignment with the nuclear norm max-margin direction and produces solutions with significantly enhanced low-rank structure. These results indicate that \NucGD\ can effectively recover the benefits of nuclear norm regularization through implicit optimization dynamics.
Overall, our work extends implicit bias analysis to nuclear norm geometries and provides both theoretical and empirical support for low-rank–oriented normalized steepest descent.

As future work, it would be natural to extend the analysis of \NucGD\ beyond linear models to \emph{deep architectures} and get more substantial generalization results. Meanwhile, as \NucGD\ has similar update logic with normalized subspace optimizers such as\cite{zhao2024galore}, a theoretical convergence guarantee under basic stochastic non-convex setting is also expected, which may better characterize its convergence rate and rank dynamics in stochastic settings. Empirically, exploring tighter connections between \NucGD\ and practical low-rank training techniques, such as parameter-efficient fine-tuning and communication-efficient algorithms in distributed learning also presents a promising direction.
\vspace{-5mm}
\newpage
\textbf{References}
\bibliographystyle{unsrt}
\bibliography{references}

\begin{thebibliography}{10}

\bibitem{soudry2018implicit}
Daniel Soudry, Elad Hoffer, Mor~Shpigel Nacson, Suriya Gunasekar, and Nathan Srebro.
\newblock The implicit bias of gradient descent on separable data.
\newblock {\em Journal of Machine Learning Research}, 19(70):1--57, 2018.

\bibitem{ravi2024implicit}
Hrithik Ravi, Clay Scott, Daniel Soudry, and Yutong Wang.
\newblock The implicit bias of gradient descent on separable multiclass data.
\newblock {\em Advances in Neural Information Processing Systems}, 37:81324--81359, 2024.

\bibitem{zhang2024implicit}
Chenyang Zhang, Difan Zou, and Yuan Cao.
\newblock The implicit bias of adam on separable data.
\newblock {\em Advances in Neural Information Processing Systems}, 37:23988--24021, 2024.

\bibitem{xie2024implicit}
Shuo Xie and Zhiyuan Li.
\newblock Implicit bias of adamw.
\newblock {\em arXiv preprint arXiv:2404.04454}, 2024.

\bibitem{liu2025muon}
Jingyuan Liu, Jianlin Su, Xingcheng Yao, Zhejun Jiang, Guokun Lai, Yulun Du, Yidao Qin, Weixin Xu, Enzhe Lu, Junjie Yan, et~al.
\newblock Muon is scalable for llm training.
\newblock {\em arXiv preprint arXiv:2502.16982}, 2025.

\bibitem{gupta2018shampoo}
Vineet Gupta, Tomer Koren, and Yoram Singer.
\newblock Shampoo: Preconditioned stochastic tensor optimization.
\newblock In {\em International Conference on Machine Learning}, pages 1842--1850. PMLR, 2018.

\bibitem{riabinin2025gluon}
Artem Riabinin, Egor Shulgin, Kaja Gruntkowska, and Peter Richt{\'a}rik.
\newblock Gluon: Making muon \& scion great again!(bridging theory and practice of lmo-based optimizers for llms).
\newblock {\em arXiv preprint arXiv:2505.13416}, 2025.

\bibitem{lau2025polargrad}
Tim Tsz-Kit Lau, Qi~Long, and Weijie Su.
\newblock Polargrad: A class of matrix-gradient optimizers from a unifying preconditioning perspective.
\newblock {\em arXiv preprint arXiv:2505.21799}, 2025.

\bibitem{fan2025implicit}
Chen Fan, Mark Schmidt, and Christos Thrampoulidis.
\newblock Implicit bias of spectral descent and muon on multiclass separable data.
\newblock {\em arXiv preprint arXiv:2502.04664}, 2025.

\bibitem{zhao2024galore}
Jiawei Zhao, Zhenyu Zhang, Beidi Chen, Zhangyang Wang, Anima Anandkumar, and Yuandong Tian.
\newblock Galore: Memory-efficient llm training by gradient low-rank projection.
\newblock {\em arXiv preprint arXiv:2403.03507}, 2024.

\bibitem{hazan2015beyond}
Elad Hazan, Kfir Levy, and Shai Shalev-Shwartz.
\newblock Beyond convexity: Stochastic quasi-convex optimization.
\newblock {\em Advances in neural information processing systems}, 28, 2015.

\bibitem{cutkosky2020momentum}
Ashok Cutkosky and Harsh Mehta.
\newblock Momentum improves normalized sgd.
\newblock In {\em International conference on machine learning}, pages 2260--2268. PMLR, 2020.

\bibitem{bernstein2018signsgd}
Jeremy Bernstein, Yu-Xiang Wang, Kamyar Azizzadenesheli, and Animashree Anandkumar.
\newblock signsgd: Compressed optimisation for non-convex problems.
\newblock In {\em International conference on machine learning}, pages 560--569. PMLR, 2018.

\bibitem{bernstein2024old}
Jeremy Bernstein and Laker Newhouse.
\newblock Old optimizer, new norm: An anthology.
\newblock {\em arXiv preprint arXiv:2409.20325}, 2024.

\bibitem{scarvelis2024nuclear}
Christopher Scarvelis and Justin~M Solomon.
\newblock Nuclear norm regularization for deep learning.
\newblock {\em Advances in Neural Information Processing Systems}, 37:116223--116253, 2024.

\bibitem{vogels2019powersgd}
Thijs Vogels, Sai~Praneeth Karimireddy, and Martin Jaggi.
\newblock Powersgd: Practical low-rank gradient compression for distributed optimization.
\newblock {\em Advances in Neural Information Processing Systems}, 32, 2019.

\end{thebibliography}
\newpage
\appendix
\section*{Appendix}
\section{Reproduction}\label{reproduction}
We reproduce the implicit bias reported by Fan et al.~\cite{fan2025implicit}(\NucGD\ added). The concise results are as follows, corresponding well with the normalized steepest descent framework.

\begin{figure}[H]
  \centering
  \begin{minipage}[b]{0.23\textwidth}
    \centering
    \includegraphics[width=\linewidth]{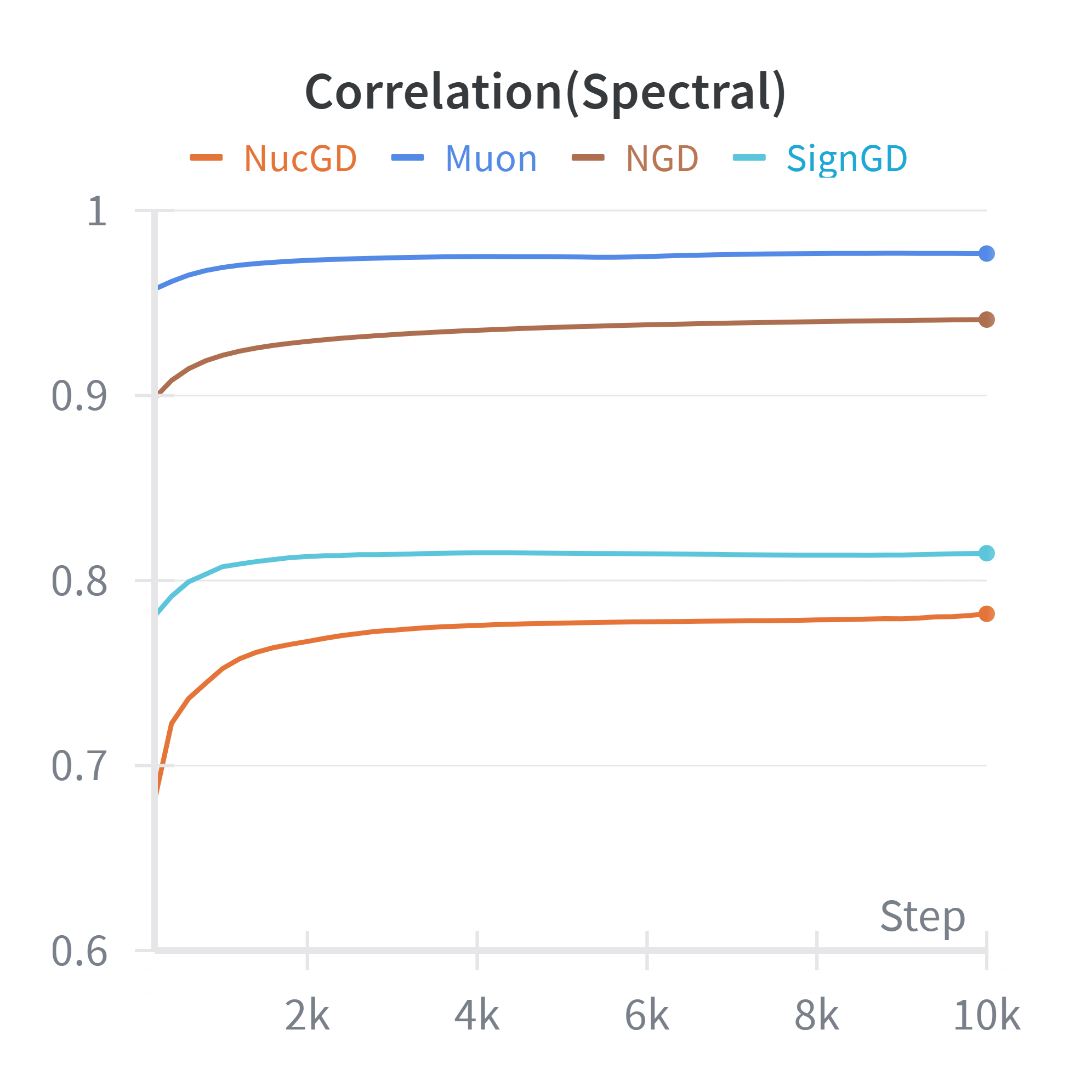}
    \centerline{}
  \end{minipage}%
  \hfill
  \begin{minipage}[b]{0.23\textwidth}
    \centering
    \includegraphics[width=\linewidth]{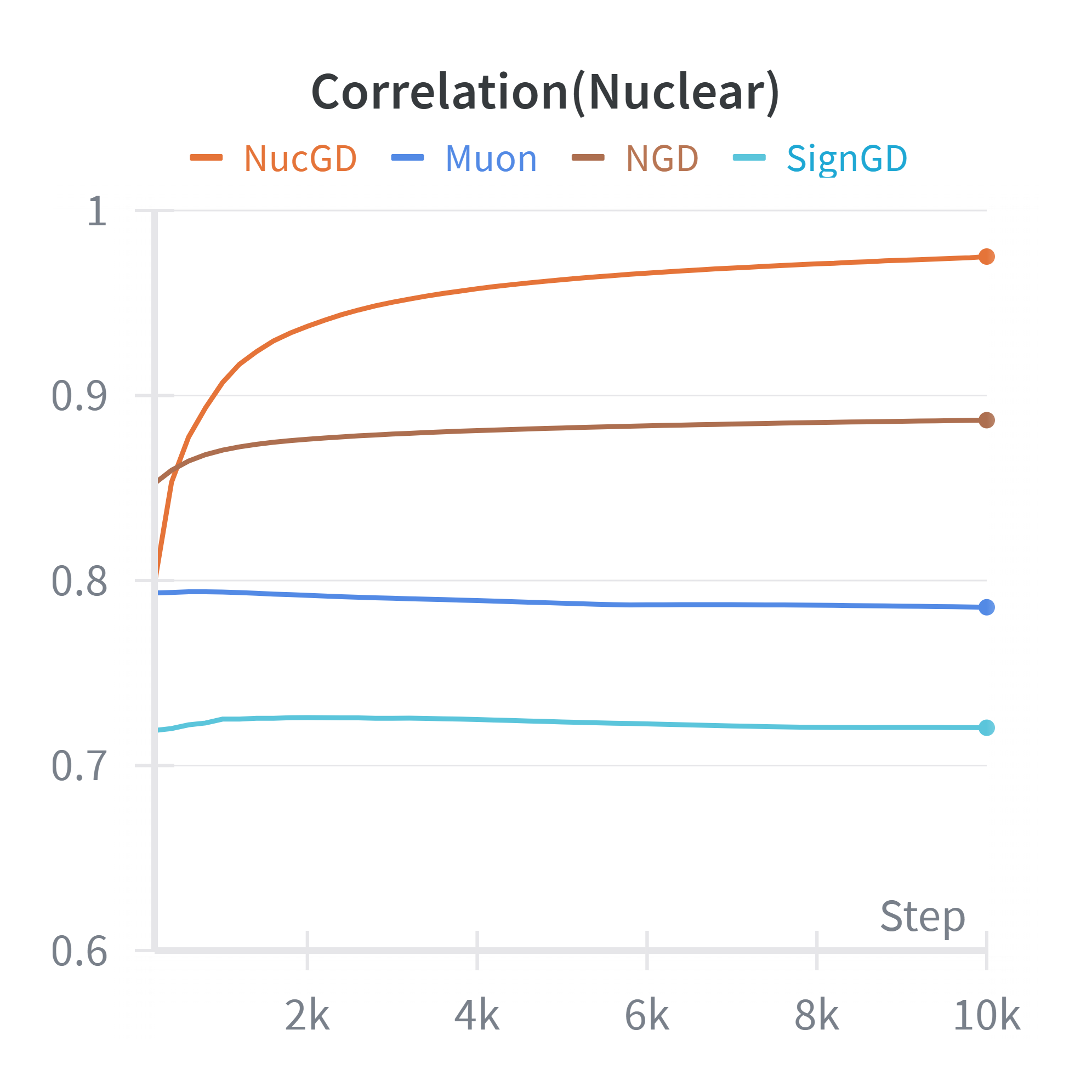}
    \centerline{}
  \end{minipage}%
  \hfill
  \begin{minipage}[b]{0.23\textwidth}
    \centering
    \includegraphics[width=\linewidth]{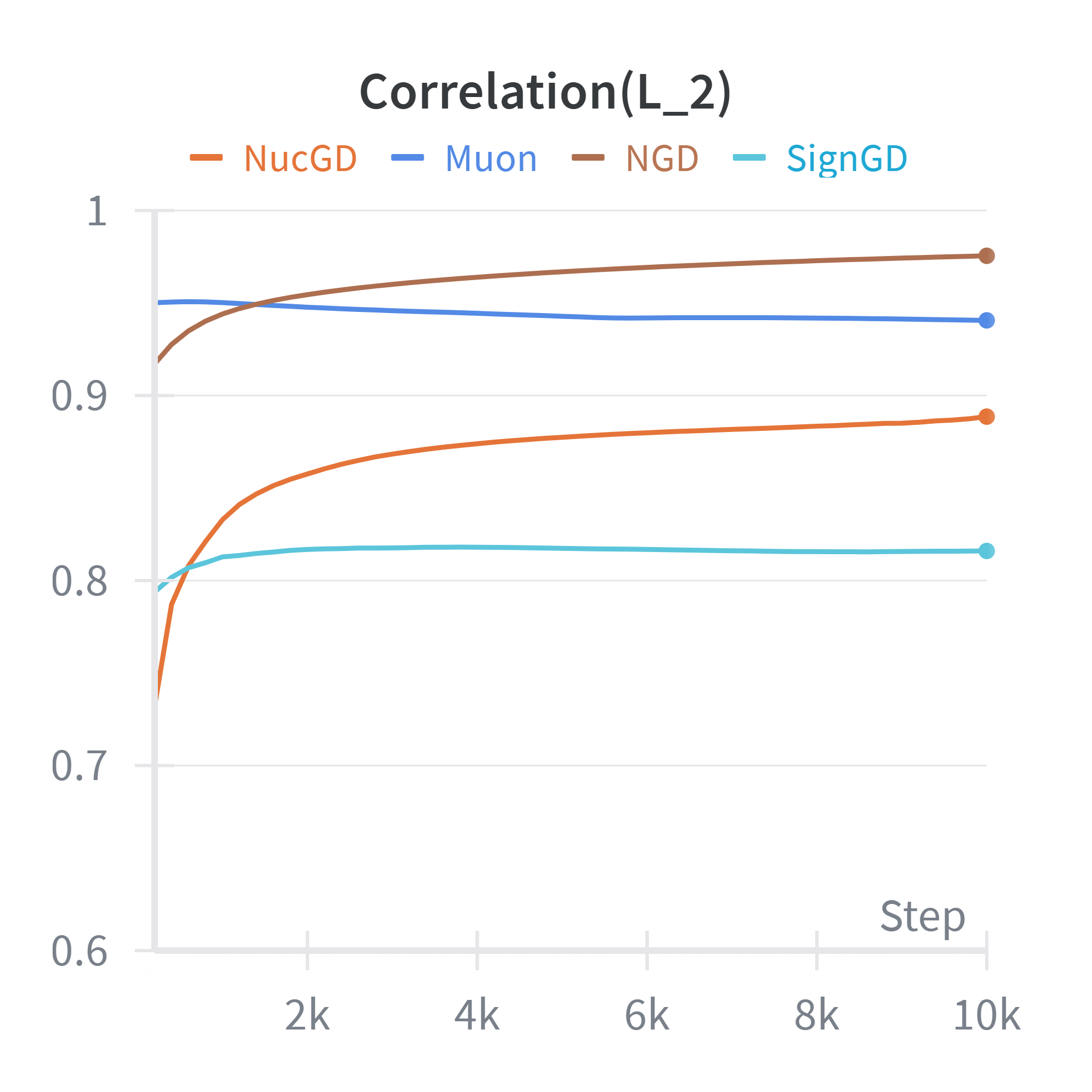}
    \centerline{}
  \end{minipage}%
  \hfill
  \begin{minipage}[b]{0.23\textwidth}
    \centering
    \includegraphics[width=\linewidth]{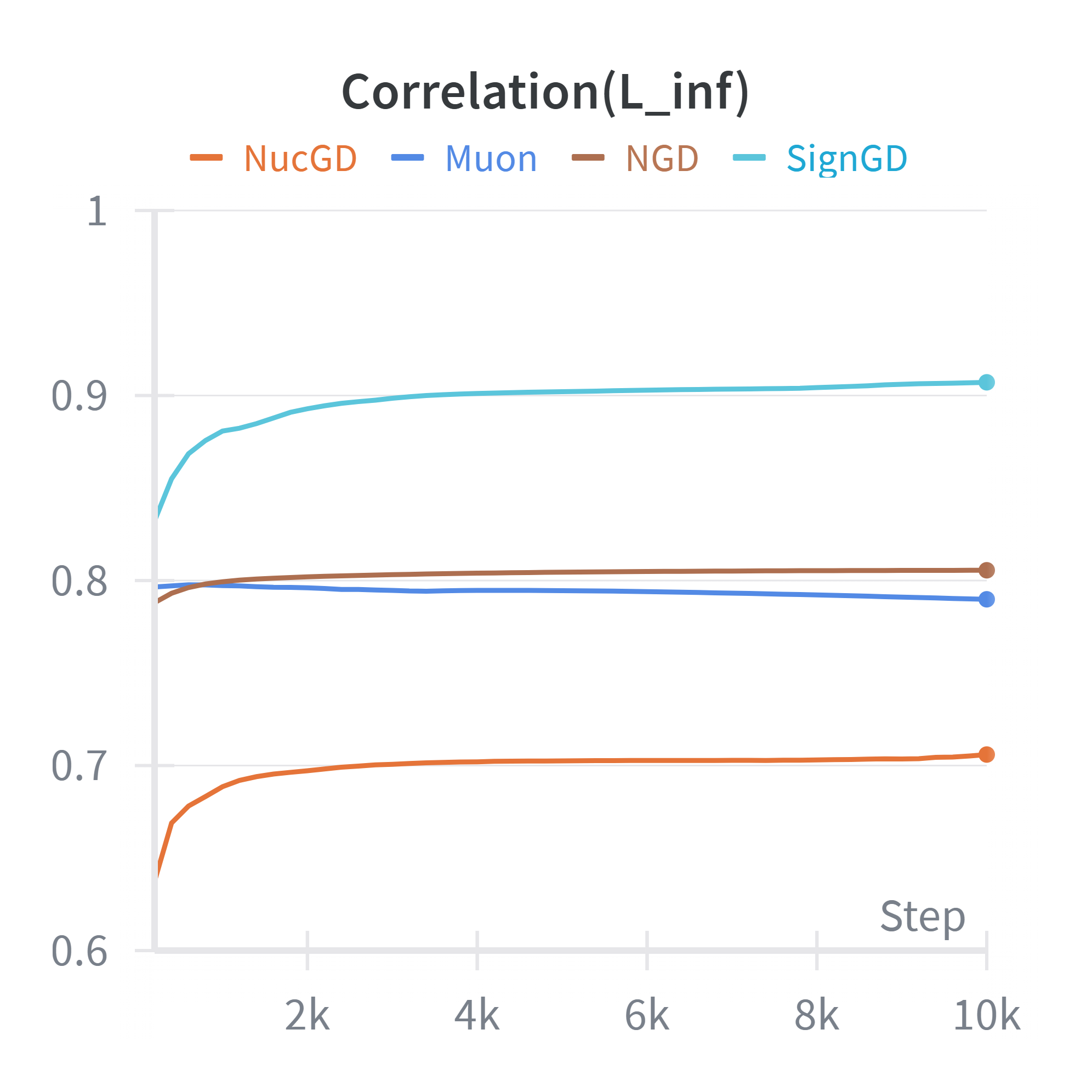}
    \centerline{}
  \end{minipage}

  \caption{Correlation between GD based solution and optimal solution corresponding to specific norm }
  \label{fig:corr}
\end{figure}

\begin{figure}[H]
  \centering
  \begin{minipage}[b]{0.23\textwidth}
    \centering
    \includegraphics[width=\linewidth]{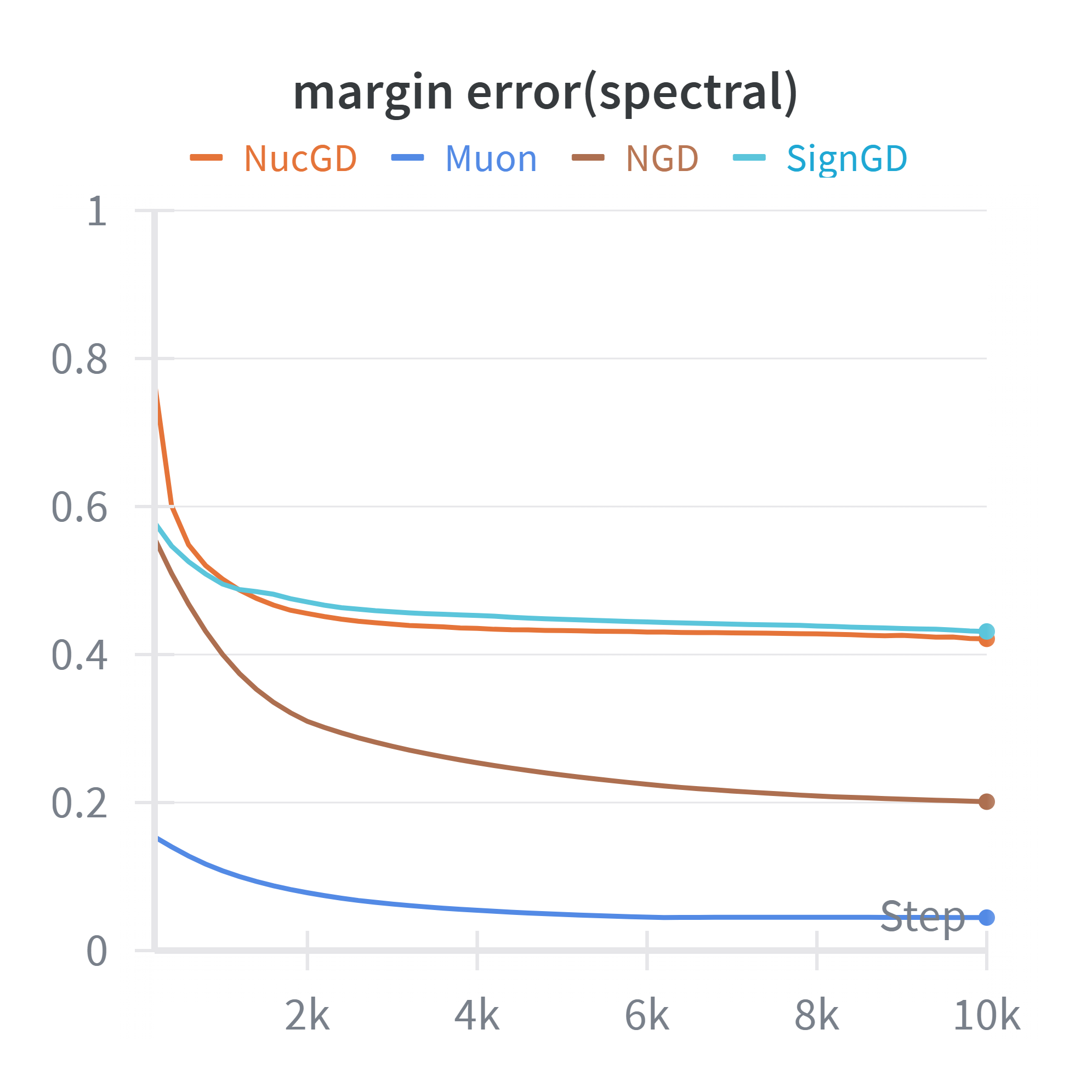}
    \centerline{}
  \end{minipage}%
  \hfill
  \begin{minipage}[b]{0.23\textwidth}
    \centering
    \includegraphics[width=\linewidth]{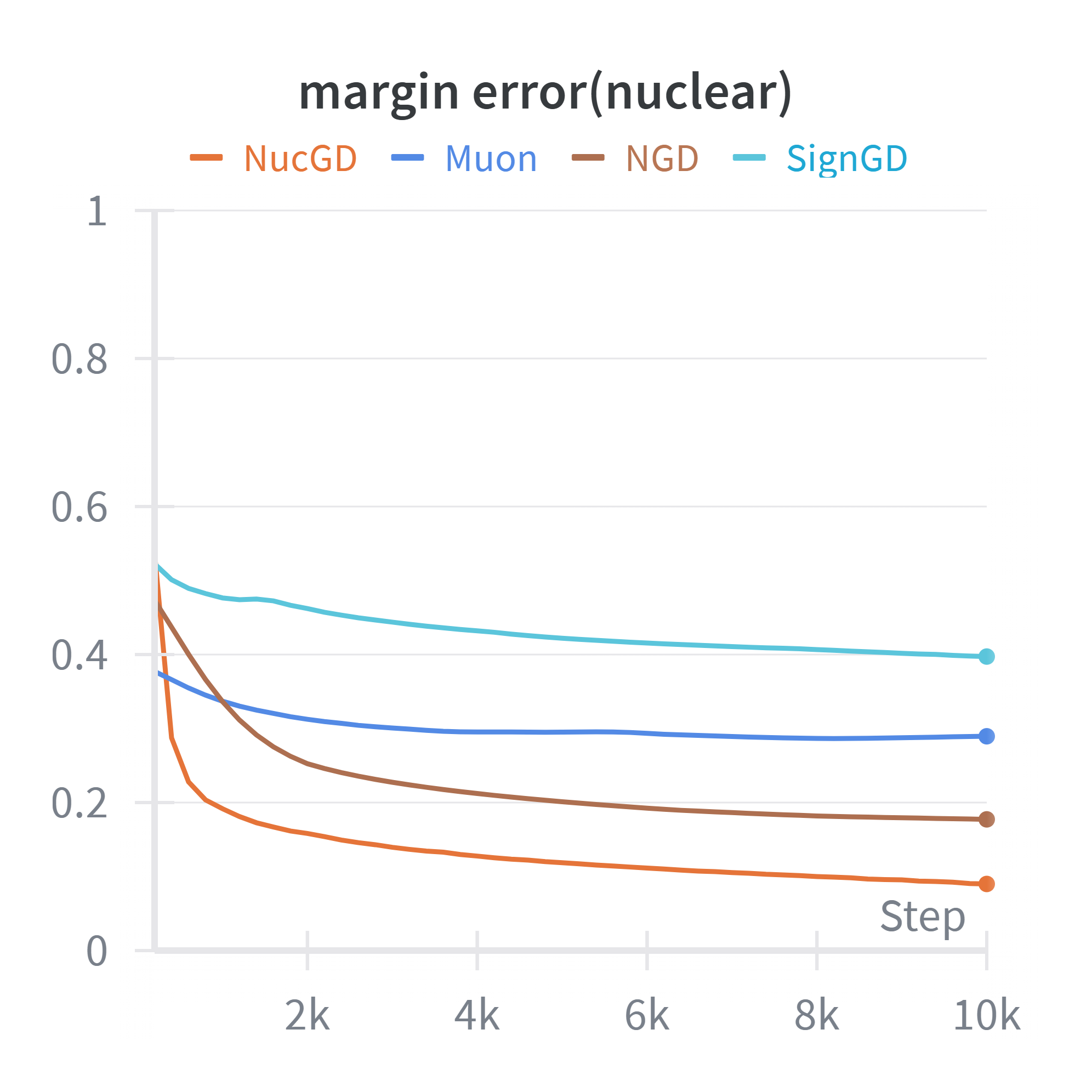}
    \centerline{}
  \end{minipage}%
  \hfill
  \begin{minipage}[b]{0.23\textwidth}
    \centering
    \includegraphics[width=\linewidth]{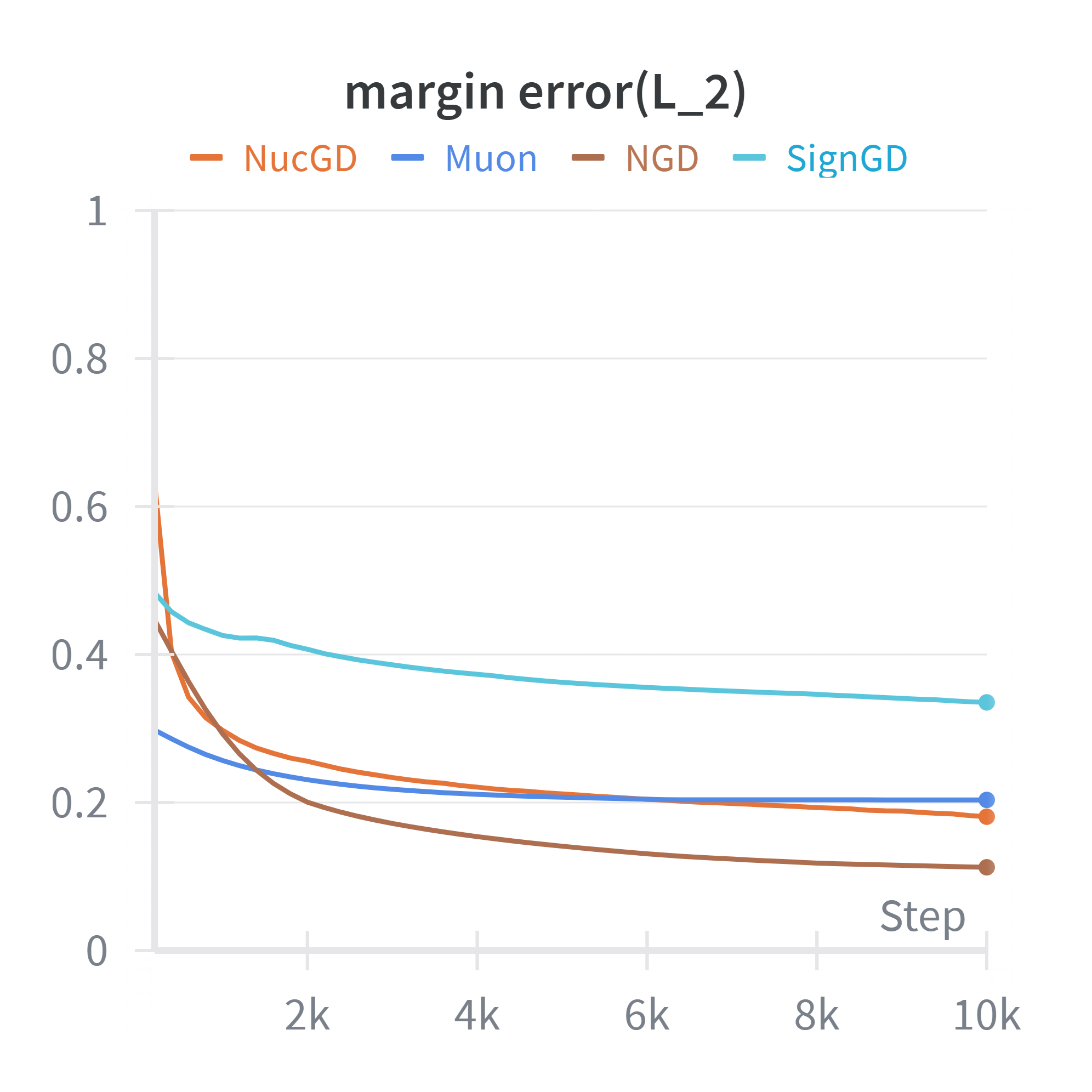}
    \centerline{}
  \end{minipage}%
  \hfill
  \begin{minipage}[b]{0.23\textwidth}
    \centering
    \includegraphics[width=\linewidth]{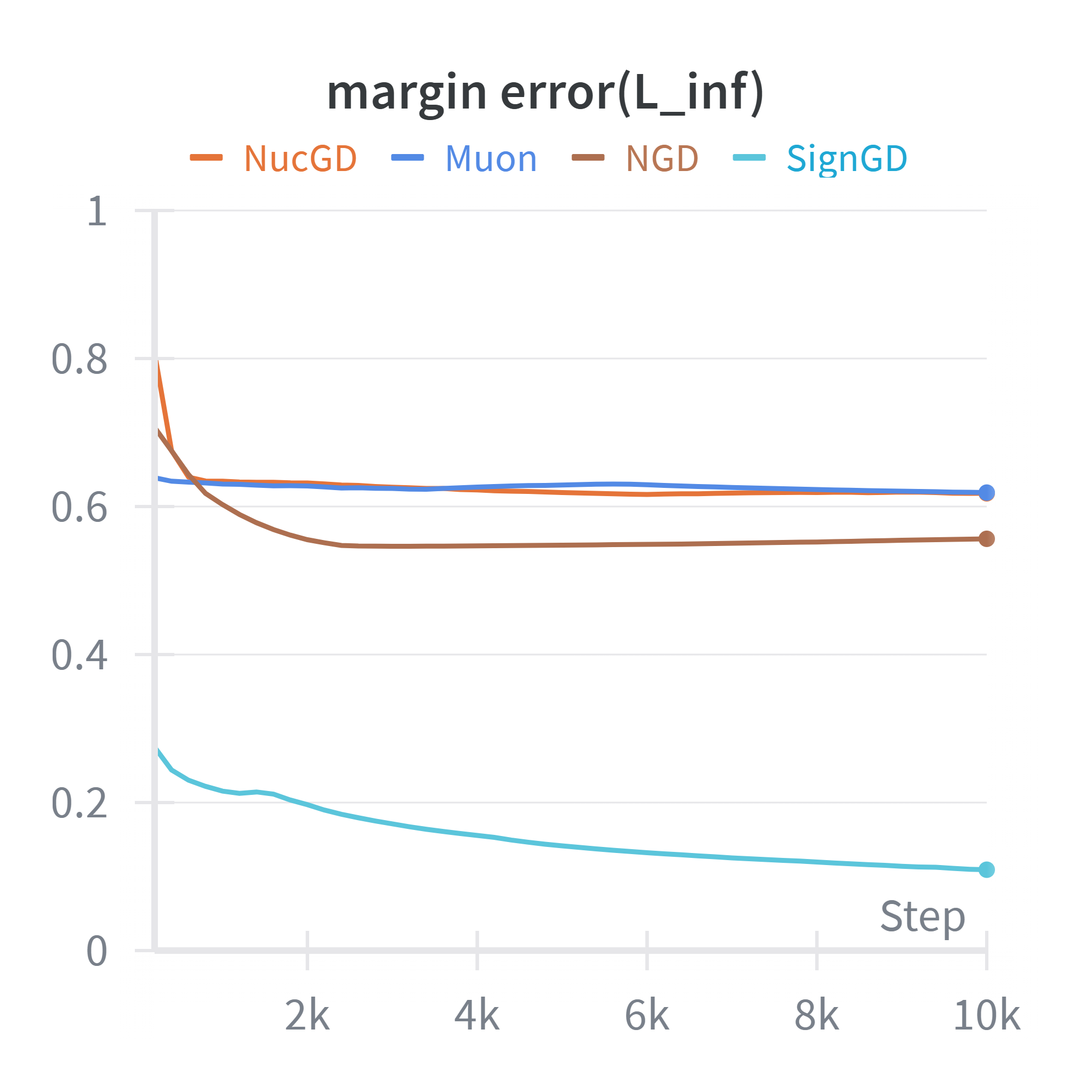}
    \centerline{}
\end{minipage}

\caption{Margin error between GD based solution and optimal solution corresponding to specific norm }
\label{fig:err}
\end{figure}
\section{Convergence Analysis of Power Iteration Direction}\label{proof}

Consider the algorithm proposed in \ref{alg:nucgd-real}, where we perform a single step of power iteration during each gradient update: $p_t = \frac{\bm{M}_t \bm{M}_t^T p_{t-1}}{\|\bm{M}_t \bm{M}_t^T p_{t-1}\|}$. Intuitively, the gap between the \emph{exact} principal left singular vector $u_t$ and the \emph{empirical} principal left singular vector $p_t$ will vanish when the algorithm converges. Here we establish a theoretical guarantee for this intuition.

We use the tangent of the angle (denoted by $\angle(\cdot, \cdot)$) to measure the discrepancy between two unit vectors. Without loss of generality, we assume $\angle(u_t, u_{t-1}), \angle(u_t, p_t) \in [0, \frac{\pi}{2}]$.
The assumptions are stated below:

\begin{enumerate}
    \item \label{ass:drift} 
    $\lim_{t\to\infty}\tan \angle(u_{t-1}, u_t)=0$.\quad(\textbf{Vanishing Drift})

    \item \label{ass:gap} 
    The ratio $\frac{\sigma_2(\bm{M}_t)}{\sigma_1(\bm{M}_t)} \le \rho < 1$ holds uniformly for all $t$, where $\sigma_1(\bm{M}_t)$ and $\sigma_2(\bm{M}_t)$ are the first and second singular values of $\bm{M}_t$, and $\rho$ is a constant.\quad(\textbf{Uniform Spectral Gap})

    \item \label{ass:acute}
    $\liminf_{t\to \infty} \tan \angle(p_{t}, u_t) < \infty$.
\end{enumerate}

First, we prove a lemma stating the contraction property of the power iteration step:
\begin{lemma}
 Under Assumption \ref{ass:gap}, $\tan \angle(p_t, u_t) \leq \rho^2 \tan \angle(p_{t-1}, u_t), \quad \forall t \geq 1.$
\end{lemma}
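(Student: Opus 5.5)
I would prove the lemma by expanding $p_{t-1}$ in the eigenbasis of $\bm{N}_t := \bm{M}_t\bm{M}_t^\top$ and tracking how one power-iteration step rescales the components. Write the SVD $\bm{M}_t = \sum_i \sigma_i u_i v_i^\top$ (dropping the index $t$ on singular values and vectors, with $u_1 = u_t$). Then $\bm{N}_t = \sum_i \sigma_i^2 u_i u_i^\top$, and the $u_i$ can be completed to an orthonormal basis of $\mathbb{R}^k$, with eigenvalue $0$ on the complement. I would decompose
\[
p_{t-1} = \alpha\, u_1 + w, \qquad w \perp u_1 .
\]
Since $p_{t-1}$ is a unit vector, this gives $\tan\angle(p_{t-1},u_t) = \|w\|_2/|\alpha|$, assuming $\alpha \neq 0$. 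If $\alpha = 0$, the right-hand side is $+\infty$ and the inequality is trivial; this case should be mentioned and handled this way.

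Next, apply $\bm{N}_t$:
\[
\bm{N}_t p_{t-1} = \alpha\sigma_1^2 u_1 + \bm{N}_t w .
\]
Because $\bm{N}_t$ is symmetric and $u_1$ is an eigenvector, $\bm{N}_t w$ stays orthogonal to $u_1$. Its norm satisfies $\|\bm{N}_t w\|_2 \le \sigma_2^2 \|w\|_2$, since $\bm{N}_t$ restricted to $u_1^\perp$ has eigenvalues at most $\sigma_2^2$. Normalization does not change the angle, so
\[
\tan\angle(p_t,u_t) = \frac{\|\bm{N}_t w\|_2}{|\alpha|\sigma_1^2} \le \frac{\sigma_2^2}{\sigma_1^2}\cdot\frac{\|w\|_2}{|\alpha|} \le \rho^2 \tan\angle(p_{t-1},u_t),
\]
where the last step uses Assumption~\ref{ass:gap}.

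Two points need care. First, $\sigma_1(\bm{M}_t)>0$, which is implicit in Assumption~\ref{ass:gap} since the ratio must be defined; this ensures $\bm{N}_t p_{t-1}\neq 0$ when $\alpha\ne0$, so $p_t$ is well-defined. Second, the sign convention: the paper assumes angles lie in $[0,\pi/2]$, so the tangent should be read as the sign-invariant quantity $\|P_{u_1^\perp}p\|/|\langle p,u_1\rangle|$, i.e. the angle between lines. With that reading, the possible sign flip of $u_t$ relative to $p_{t-1}$ causes no issue.

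The main obstacle is mild. It is verifying the bound $\|\bm{N}_t w\| \le \sigma_2^2\|w\|$ on $u_1^\perp$, which rests on the spectral decomposition. Making the tangent convention consistent is the other step that needs attention.
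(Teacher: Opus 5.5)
Your proposal is correct and follows essentially the same argument as the paper: both expand $p_{t-1}$ in the eigenbasis of $\bm{N}_t=\bm{M}_t\bm{M}_t^\top$, treat the case of zero component along $u_t$ as trivial, and bound the components orthogonal to $u_t$ using $\lambda_2=\sigma_2^2$. The differences are minor: you lump those components into one vector $w\perp u_1$ rather than writing them coordinate-wise, and your remarks on $\sigma_1>0$ (so that $p_t$ is well defined) and on reading the tangent as the sign-invariant angle between lines make explicit what the paper leaves implicit.
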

\begin{proof}
Let $\sigma_1 \ge \sigma_2 \ge \dots \ge \sigma_{\min\{k, d\}} \ge 0$ be the singular values of $\bm{M}_t$. Then the eigenvalues of $\bm{N}_t = \bm{M}_t \bm{M}_t^T$ are $\lambda_i = \sigma_i^2$ for $i\leq \min\{k, d\}$ or $\lambda_i=0$ otherwise. We decompose the vector $p_{t-1}$ in the basis of the eigenvectors $\{u_t^{(i)}\}_{i=1}^k$ of $\bm{N}_t$:
\[
p_{t-1} = c_1 u_t^{(1)} + \sum_{i=2}^k c_i u_t^{(i)},
\]
where $u_{t}^{(i)}$ are eigenvectors corresponding to $\lambda_i$ for $i \ge 2$ and $u_{t}^{(1)}=u_t$. When $c_1=0$, the inequality holds trivially (assuming the angle is $\pi/2$, or treating tangent as infinity). For $c_1 \neq 0$, we have:
\[
\tan \angle(p_{t-1}, u_t) = \frac{\sqrt{\sum_{i=2}^k c_i^2}}{|c_1|}.
\]
Thus $\bm{N}_t p_{t-1} = c_1 \lambda_1 u_t^{(1)} + \sum_{i=2}^k c_i \lambda_i u_t^{(i)}$. The tangent of the new angle is:
\[
\tan \angle(p_t, u_t) = \frac{\sqrt{\sum_{i=2}^k c_i^2 \lambda_i^2}}{|c_1| \lambda_1}.
\]
Applying $\sqrt{\sum_{i=2}^k c_i^2 \lambda_i^2} \le \lambda_2 \sqrt{\sum_{i=2}^k c_i^2}$, we obtain:
\[
\tan \angle(p_t, u_t) \le \frac{\lambda_2}{\lambda_1} \frac{\sqrt{\sum_{i=2}^k c_i^2}}{|c_1|} = \frac{\sigma_2^2}{\sigma_1^2} \tan \angle(p_{t-1}, u_t) \le \rho^2 \tan \angle(p_{t-1}, u_t).
\]
\end{proof}

\begin{theorem}
        Under Assumptions \ref{ass:drift}--\ref{ass:acute}, we have $\lim_{t \to \infty} \langle p_t, u_t \rangle = 1.$
\end{theorem}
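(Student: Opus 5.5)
The plan is to track $a_t := \tan\angle(p_t,u_t)$ and show $a_t\to 0$. Since $p_t$ and $u_t$ are unit vectors and we take the acute representative of the angle, this gives $\langle p_t,u_t\rangle=\cos\angle(p_t,u_t)=1/\sqrt{1+a_t^2}\to 1$. Note that $u_t$ is only defined up to sign. We fix the sign of $u_t$ so that $\langle p_t,u_t\rangle\ge 0$, which is consistent with the convention $\angle(u_t,p_t)\in[0,\pi/2]$. Nothing below depends on the sign of $u_{t-1}$ relative to $u_t$, because only unsigned lines enter the tangent.

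\textbf{Step 1: a one-step recursion.} By the Lemma, $a_t\le \rho^2\tan\angle(p_{t-1},u_t)$. Let $\alpha=\angle(p_{t-1},u_{t-1})$ and $\beta=\angle(u_{t-1},u_t)$, both acute. The angle between lines satisfies the triangle inequality, so $\angle(p_{t-1},u_t)\le \alpha+\beta$. Whenever $\alpha+\beta<\pi/2$, monotonicity of $\tan$ and the addition formula give
\[
\tan\angle(p_{t-1},u_t)\le \frac{a_{t-1}+b_t}{1-a_{t-1}b_t},\qquad b_t:=\tan\angle(u_{t-1},u_t).
\]
Combining the two bounds yields
\[
a_t\le \rho^2\,\frac{a_{t-1}+b_t}{1-a_{t-1}b_t}\quad\text{whenever } a_{t-1}b_t<1.
\]

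\textbf{Step 2: getting into a bounded regime.} By Assumption \ref{ass:acute}, there is a finite $K$ with $a_t\le K$ for infinitely many $t$. Fix $\varepsilon>0$ small enough that $\rho^2(1+\varepsilon)<1$. By Assumption \ref{ass:drift}, there is $T_0$ such that $b_t\le \delta$ for all $t\ge T_0$, where $\delta$ is chosen so that $\delta(K+1)<\varepsilon/(1+\varepsilon)$.

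Pick $t_0\ge T_0$ with $a_{t_0}\le K$. We show by induction that $a_t\le \max(K,1)=:K'$ for all $t\ge t_0$. Given $a_{t-1}\le K'$, we have $1-a_{t-1}b_t\ge 1-K'\delta\ge 1/(1+\varepsilon)$. Hence
\[
a_t\le \rho^2(1+\varepsilon)(a_{t-1}+\delta)=:q\,(a_{t-1}+\delta),\qquad q<1.
\]
The right-hand side is at most $qK'+q\delta\le K'$ as soon as $\delta\le (1-q)K'/q$, which we also impose. So the bound $a_t\le K'$ propagates.

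\textbf{Step 3: conclude.} For $t>t_0$ we now have the linear recursion $a_t\le q a_{t-1}+q\,b_t'$, with $b_t'=\sup_{s\ge t} b_s\to 0$. Unrolling it gives
\[
\limsup_{t\to\infty} a_t\le \frac{q}{1-q}\,\limsup_{t\to\infty} b_t=0,
\]
and a standard $\epsilon$-argument makes this rigorous: fix any $\eta$, go to a later time after which $b_t\le\eta$, and the recursion forces $\limsup a_t\le q\eta/(1-q)$. Therefore $a_t\to 0$, and so $\langle p_t,u_t\rangle\to 1$.

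\textbf{Main obstacle.} The delicate step is Step 2, which controls the denominator $1-a_{t-1}b_t$. Without a uniform bound on $a_{t-1}$, the angle $\angle(p_{t-1},u_t)$ could exceed $\pi/2$, and the tangent addition bound would then fail. Assumption \ref{ass:acute} is exactly what provides a starting time with bounded $a_t$, after which the induction keeps the iterates bounded.
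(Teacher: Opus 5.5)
Your proof is correct and takes the same route as the paper. The Lemma, combined with the line-angle triangle inequality and the tangent addition formula, gives $a_t\le\rho^2(a_{t-1}+b_t)/(1-a_{t-1}b_t)$; Assumption~\ref{ass:acute} provides a starting time from which induction keeps $a_t$ bounded; and the resulting linear contraction is unrolled using vanishing drift. Your version is slightly more careful than the paper's: you fix $K$ before choosing the drift threshold, and you restart the unrolling at later times, whereas the paper's bound $\rho^k e_T+\frac{\rho}{1-\rho}\sup_{t>T}\delta_t$ does not by itself tend to zero for a fixed $T$.
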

\begin{proof}
Denote $\delta_t = \tan\angle(u_t, u_{t-1})$ and $e_t = \tan\angle(p_t, u_t)$.

Choose a constant $N>0$ such that for all $t\geq N$, $\delta_t \leq \min\{\frac{K(1-\rho^2)}{K^2+\rho^2}, \frac{1-\rho^2}{1+\rho^2}\}$ (by Assumption \ref{ass:drift}).

Take a finite upper bound $K >\liminf_{t\to\infty}e_t$ and find $T> N$ with $e_T < K$ (by Assumption \ref{ass:acute}). 

Consider the angle between the previous estimate $p_{T-1}$ and the current true eigenvector $u_T$. By the triangle inequality for angles:
\begin{equation}\label{angle}
    \angle(p_{T-1}, u_T) \le \angle(p_{T-1}, u_{T-1}) + \angle(u_{T-1}, u_T).
\end{equation}
Since $\delta_t e_t\leq\frac{K^2(1-\rho^2)}{K^2+\rho^2}<1$, implying the sum of angles is less than $\pi/2$, we can apply the tangent function to \eqref{angle} and combine it with the Lemma:
\begin{equation}
\label{eq:angle_drift}
    e_T\leq\rho^2\tan \angle(p_{T-1}, u_T)\leq \rho^2\frac{\delta_{T}+e_{T-1}}{1-\delta_{T}e_{T-1}}< \rho^2\frac{\delta_T+K}{1-K\delta_T }\leq K.
\end{equation}

By induction, we can prove $e_t<K$ for all $t\geq T$. Consequently, the following contraction property holds:
\begin{equation}\label{contraction}
   e_t\leq \rho^2\frac{e_{t-1}+\delta_{t}}{1-\delta_{t}e_{t-1}}\leq \frac{\rho^2}{1-K\delta_t }(e_{t-1}+\delta_t)\leq \rho(e_{t-1}+\delta_t) \quad \text{for all } t \geq T.
\end{equation} 

Applying the inequality recursively and using Assumption \ref{ass:drift}, we get:
$$e_{T+k}\leq \rho^{k}e_{T}+\sum_{i=0}^{k-1}\rho^{k-i}\delta_{T+i+1}\leq \rho^{k}e_T + \frac{\rho}{1-\rho}\sup_{t>T}\delta_{t}\to 0 \quad\text{ as } k\to \infty.$$
This concludes the proof.

\begin{remark}
    Assumptions \ref{ass:drift} and \ref{ass:gap} are practical in training, and their geometric insight is valid. Assumption \ref{ass:acute} essentially requires that the estimate does not become orthogonal to the true vector infinitely often. In practice, one might employ a \textbf{Random Restart} strategy: restart $p_{t+1}\sim \mathcal{N}(0, I_k)$ if the norm of the projected vector collapses, which helps avoid trivial solutions and satisfy Assumption \ref{ass:acute} almost surely.
\end{remark}
\end{proof}

\end{document}